\documentclass[acmsmall]{acmart}

\setcopyright{none}
\copyrightyear{}
\acmYear{}
\acmDOI{}
\acmPrice{}
\acmISBN{}
\acmConference{}
\acmBooktitle{}
\received{}

\usepackage{amsmath,amssymb,amsfonts}
\usepackage{algorithmic}
\usepackage{listings}
\usepackage{graphicx}
\usepackage{textcomp}
\usepackage{subfigure}
\usepackage{xcolor}
\usepackage{tcolorbox}

\usepackage{tcolorbox} 

\newcommand{\R}{\mathbb{R}}

\newcommand{\cY}{\mathcal{Y}}
\newcommand{\cX}{\mathcal{X}}
\newcommand{\cB}{\mathcal{B}}

\def\BibTeX{{\rm B\kern-.05em{\sc i\kern-.025em b}\kern-.08em
    T\kern-.1667em\lower.7ex\hbox{E}\kern-.125emX}}

\usepackage{multirow}

\newtheorem{prop}{Proposition}

\newtheorem{rmk}{Remark}
\newtheorem{defn}{Definition}

\lstdefinestyle{mystyle}{
    backgroundcolor=\color{white},   
    commentstyle=\color{gray},
    keywordstyle=\color{blue},
    numberstyle=\tiny\color{gray},
    stringstyle=\color{red},
    basicstyle=\ttfamily\footnotesize,
    breakatwhitespace=false,         
    breaklines=true,                 
    captionpos=b,                    
    keepspaces=true,                 
    numbers=left,                    
    numbersep=5pt,                  
    showspaces=false,                
    showstringspaces=false,
    showtabs=false,                  
    tabsize=2
}

\usepackage[commandnameprefix=always]{changes}

\renewcommand\footnotetextcopyrightpermission[1]{}

\begin{document}


\title{Optimizing Tensor Train Decomposition in DNNs for RISC-V Architectures Using Design Space Exploration and Compiler Optimizations}

\author{Theologos Anthimopoulos}
\authornote{Both authors contributed equally to this research.}

\affiliation{%
  \institution{School of Informatics, Aristotle University of Thessaloniki}
  \city{Thessaloniki}
  \country{Greece}
}
\email{tanthimop@csd.auth.gr}
\orcid{0009-0007-3620-4637}

\author{Milad Kokhazadeh}
\authornotemark[1]

\affiliation{%
  \institution{School of Informatics, Aristotle University of Thessaloniki}
  \city{Thessaloniki}
  \country{Greece}
}
\email{Kokhazad@csd.auth.gr}
\orcid{0009-0001-7430-3691}

\author{Vasilios Kelefouras}
\affiliation{%
  \institution{School of Engineering, Computing and Mathematics, University of Plymouth}
  \city{Plymouth}
  \country{United Kingdom}
}
\email{vasilios.kelefouras@plymouth.ac.uk}
\orcid{0000-0001-9591-913X}

\author{Benjamin Himpel}
\affiliation{%
 \institution{School of Informatics, Reutlingen University}
 \city{Reutlingen}
 \country{Germany}}
\email{Benjamin.Himpel@reutlingen-university.de}
\orcid{0000-0002-3737-8379}

\author{Georgios Keramidas}
\affiliation{%
  \institution{School of Informatics, Aristotle University of Thessaloniki}
  \city{Thessaloniki}
  \country{Greece}}
\email{gkeramidas@csd.auth.gr}
\orcid{0000-0003-0460-6061}


\begin{abstract}

Deep neural networks (DNNs) have become indispensable in many real-life applications like natural language processing, and autonomous systems. However, deploying DNNs on resource-constrained devices, e.g., in RISC-V platforms, remains challenging due to the high computational and memory demands of fully connected (FC) layers, which dominate resource consumption. Low-rank factorization (LRF) offers an effective approach to compressing FC layers, but the vast design space of LRF solutions involves complex trade-offs among FLOPs, memory size, inference time, and accuracy, making the LRF process complex and time-consuming.
This paper introduces an end-to-end LRF design space exploration methodology and a specialized design tool for optimizing FC layers on RISC-V processors. Using Tensor Train Decomposition (TTD) offered by TensorFlow T3F library, the proposed work prunes the LRF design space by excluding first, inefficient decomposition shapes and second, solutions with poor inference performance on RISC-V architectures. Compiler optimizations are then applied to enhance custom T3F layer performance, minimizing inference time and boosting computational efficiency. On average, our TT-decomposed layers run 3× faster than IREE and 8× faster than Pluto on the same compressed model. This work provides an efficient solution for deploying DNNs on edge and embedded devices powered by RISC-V architectures.

\end{abstract}


\maketitle
\noindent\textbf{Note:}{This is a pre-print version of the manuscript. The final version is published in ACM Transactions on Embedded Computing Systems, DOI: https://doi.org/10.1145/3768624}

\section{Introduction}\label{introduction}



Deep Neural Networks (DNNs) are integral to modern Artificial Intelligence (AI), powering applications in image processing \cite{image-processing}, natural language processing (NLP) \cite{nlp}, speech recognition \cite{speech-recognition}, and autonomous systems \cite{autonomous-system}. Their ability to learn complex patterns has driven breakthroughs in areas like object detection \cite{object-detection} and language translation \cite{language-translation}, making them indispensable across fields ranging from healthcare \cite{healthcare} to autonomous driving. 
Despite their widespread success, deploying DNNs on resource-constrained devices, such as edge platforms, presents significant challenges~\cite{rcd-challenges}. These models require substantial computational power and memory, leading to high latency, energy consumption, and storage demands, which hinder their feasibility on devices with limited processing capabilities~\cite{rcd-challenges}.

RISC-V, an open-source Instruction Set Architecture (ISA), has become a popular choice for embedded and edge devices~\cite{risc-v-acceleration}. This has driven a growing demand for optimized DNN implementations specifically tailored to the unique constraints and capabilities of RISC-V platforms. 
To address these challenges, DNN compression techniques such as low-rank factorization (LRF), pruning, quantization, and knowledge distillation have become essential tools \cite{compression-techniques}. These methods aim to reduce the size and computational complexity of DNNs, enabling efficient deployment on devices with limited memory, processing power, and energy capacity. By reducing the number of parameters and FLOPs, compression techniques significantly enhance the efficiency and accessibility of DNN models across a wide range of applications \cite{compression-techniques}. Among these methods, LRF \cite{milad2} has gained particular attention for its ability to exploit redundancies in neural network weights. LRF achieves compression by approximating weight matrices or tensors through decomposition into smaller components, effectively reducing both the parameter count and computational cost \cite{milad2}. This approach can result in substantial reductions in memory usage and faster inference times, making it especially suitable for edge devices.

\begin{figure*}[tbp]
\centerline{\includegraphics[trim={1mm 0 0 1}, clip,width=1\textwidth]{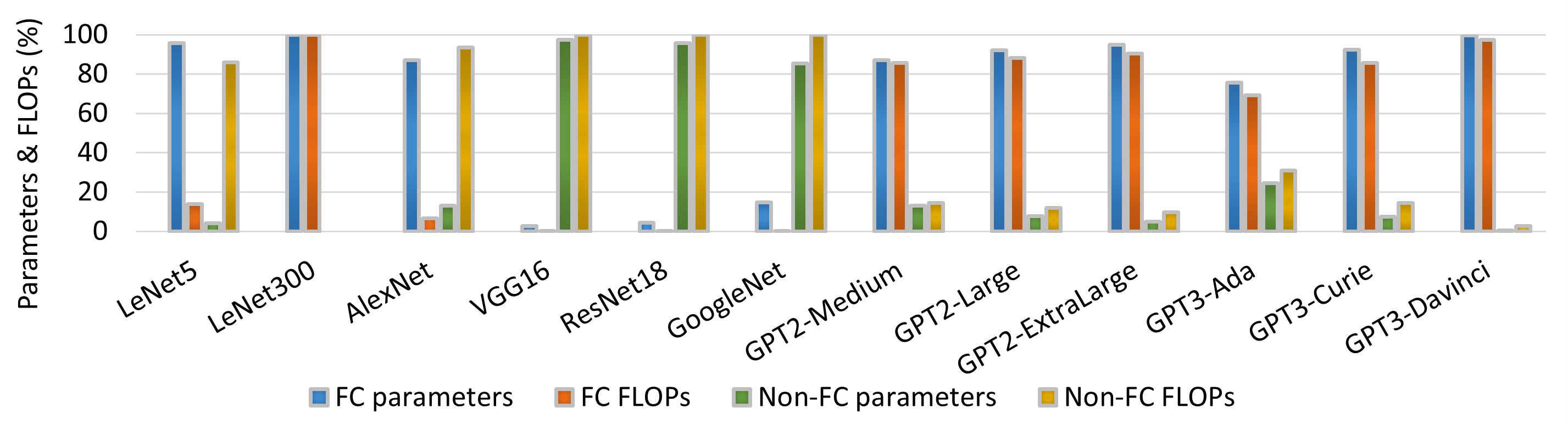}}
\caption{Parameters and FLOPs percentage of FC and non-FC parts for various DNN models}
\label{fig:params_flops}
\end{figure*}

Several algorithms support LRF, including Tensor Train Decomposition (TTD)\cite{Oseledets2011}, Canonical Polyadic (CP) decomposition\cite{cp}, and Singular Value Decomposition (SVD)~\cite{svd}. While SVD is specifically designed for compressing two-dimensional arrays, methods like TTD are optimized for tensors (multi-dimensional arrays). To apply tensor-based methods to matrices, the matrices must first be reshaped into tensors before factorization.
In this work, we utilize the TTD method as implemented in the T3F library~\cite{t3f} for TensorFlow. T3F performs decomposition by breaking down tensors into smaller tensors (called cores) interconnected via the Kronecker product~\cite{t3f}, enabling efficient compression and computation.

Fully connected (FC) layers, especially in large language models (LLMs), present significant challenges due to their high computational and memory demands. As shown in Figure~\ref{fig:params_flops}, FC layers dominate the resource usage in many DNN models, including Convolutional Neural Networks (CNNs) and LLMs. The figure highlights the substantial portion of parameters and floating-point operations (FLOPs) in the FC\footnote{In LLMs, each transformer layer can be decomposed into multiple subcomponents. Specifically, the transformer layer comprises a multi-head self-attention mechanism, which is primarily implemented as a FC layer operating on the input queries, keys, and values, followed by a subsequent FC layer for output processing. Additionally, the transformer layer includes feedforward sublayers, typically implemented as FC layers with activation functions. These core components are interspersed with layer normalization and residual connections to enhance gradient flow and model stability.} versus non-FC parts of well-known models\footnote{We estimated the architecture and layer details of GPT-3 models based on publicly available suggestions since official specifications are not provided.}. This dominance often results in increased latency and energy consumption, making it difficult to deploy complex DNNs efficiently on resource-constrained platforms. To address this, there is a critical need for optimized implementations that balance performance with hardware constraints.



\begin{figure}[tbp]
    \centering
    \begin{minipage}{0.5\linewidth}
        \centering
        \includegraphics[width=\linewidth]{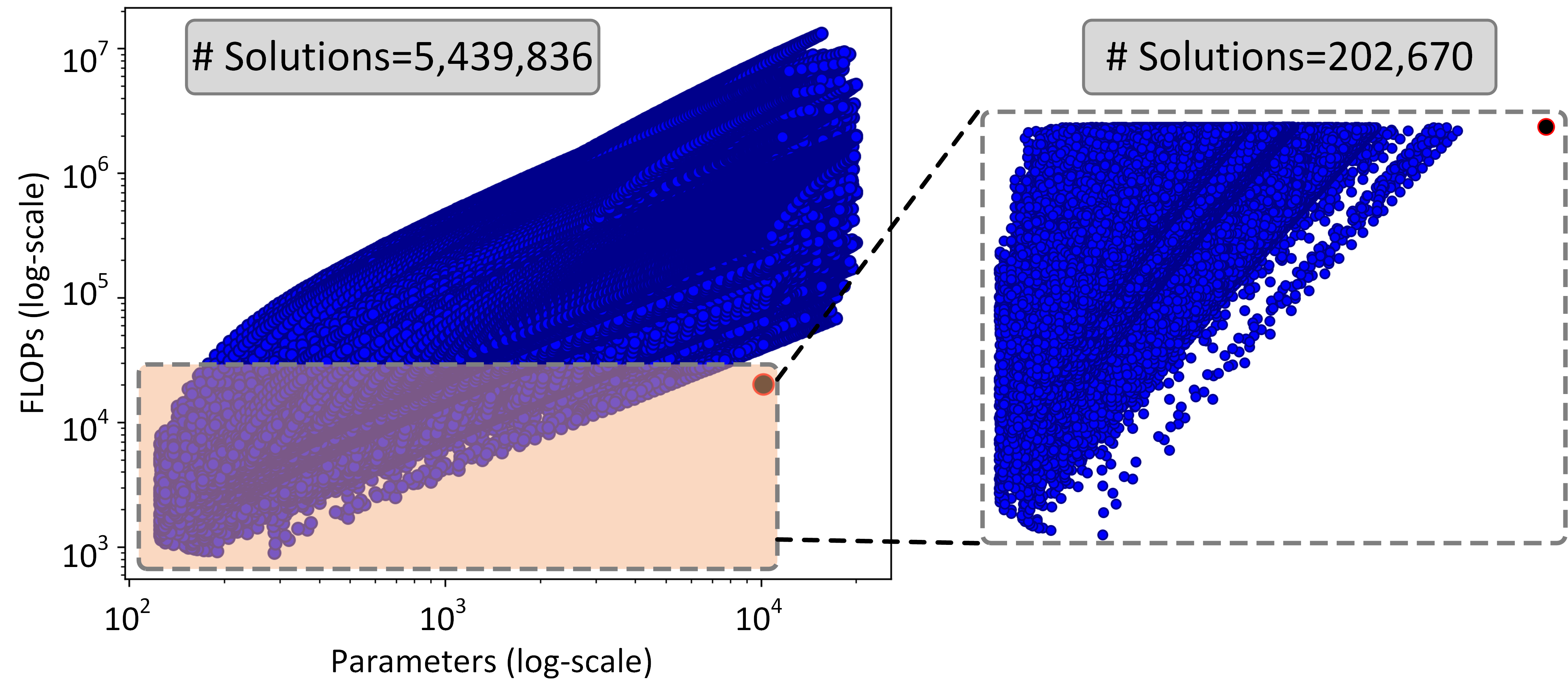}
        \caption*{(a)}
        \label{fig:design_space_motivation}
    \end{minipage}
    \hfill
    \begin{minipage}{0.45\linewidth}
        \centering
        \includegraphics[trim={1mm 0 0 0}, clip,width=\linewidth]{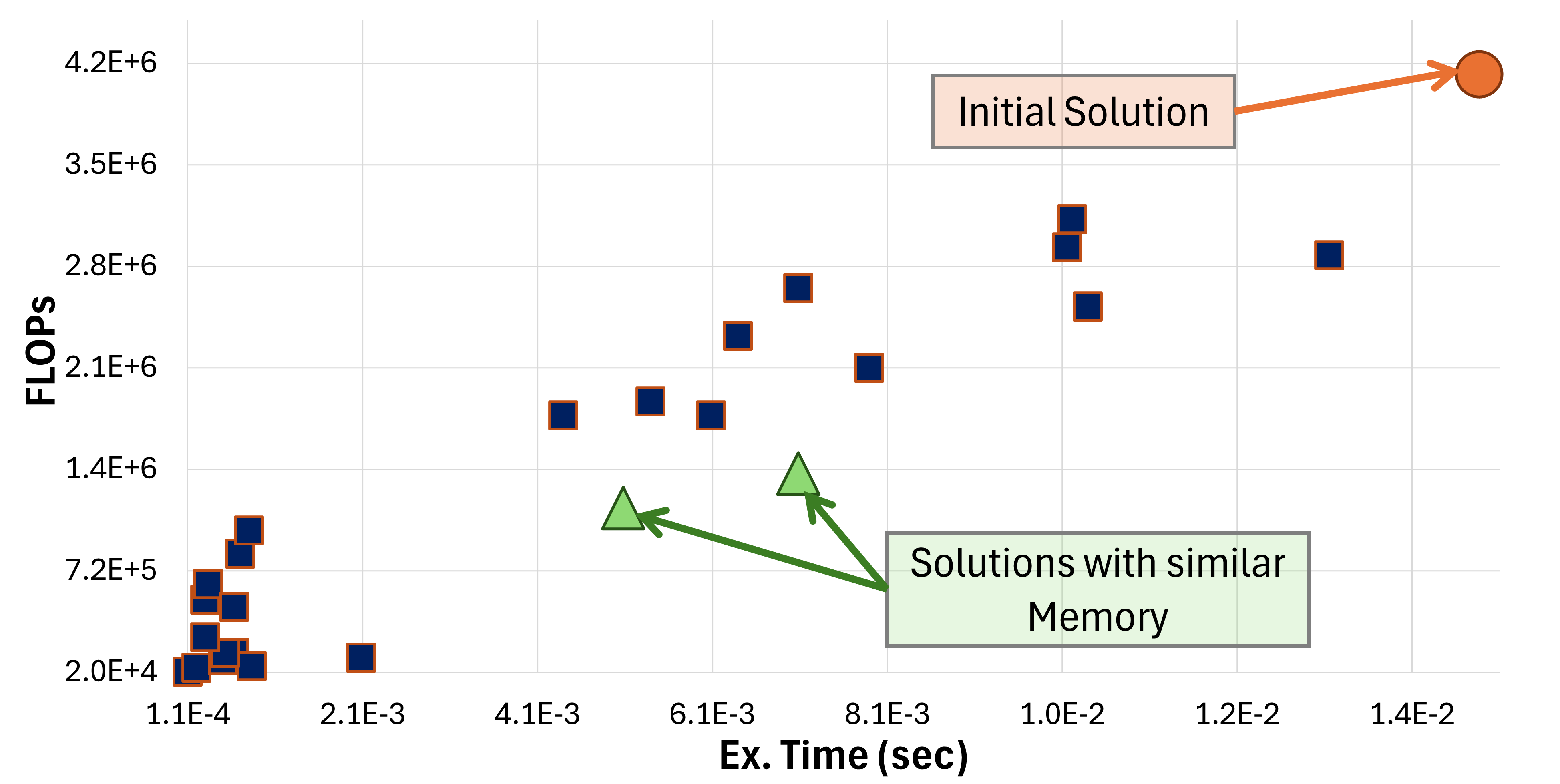}
        \caption*{(b)}
        \label{fig:execution_time_motivation}
    \end{minipage}
    \caption{a) The DS of parameter vs. FLOPs is shown for all solutions (left) and solutions with less parameters and FLOPs than the initial layer (right), for a FC layer with shape of 120x84 based on the T3F library; the big black circle corresponds to the parameters and FLOPs of the initial layer. b) FLOPs vs. inference execution time for different LRF solutions of the studied FC layer}
    \label{fig:motivation}
\end{figure}

Applying LRF to FC layers is challenging due to the vast Design Space (DS). 
Figure~\ref{fig:motivation}.a illustrates the enormous number of potential LRF configurations even for a very small FC layer of size $120 \times 84$. Exploring all possible solutions even for such a small layer is a very time consuming process. We use this small layer as an example in Figure~\ref{fig:motivation}.a because the DS of larger layers can be massive (up to $10^{33}$ possible solutions). 
Efficient navigation of the DS is essential to identify efficient solutions that balance performance and resource usage without exhaustive testing.

Furthermore, as illustrated in Figure~\ref{fig:motivation}.b, LRF solutions with comparable memory footprints can exhibit significant differences in FLOPs and execution time. Notably, FLOPs and execution time do not always align (Figure~\ref{fig:motivation}.b). This disparity highlights the importance of evaluating configurations not only for memory efficiency but also for computational performance, particularly when targeting edge platforms, such as RISC-V. Achieving an optimal balance between speed and efficiency requires both advanced compiler optimization strategies and comprehensive Design Space Exploration (DSE).

To address these challenges, this work proposes a novel LRF DSE methodology and a specialized design tool tailored for optimizing FC layers in DNNs that combines high-level and low-level DS reduction strategies with compiler optimization techniques specifically designed for RISC-V platforms. First, high-level DS reduction is applied to narrow down the DS of LRF solutions by excluding decomposition shapes that do not achieve low FLOPs. This is followed by low-level DS reduction, which refines the search by eliminating solutions that result in inefficient execution times, ensuring that only the most promising configurations are retained. Finally, compiler optimizations are applied to further enhance the performance of the selected configurations on RISC-V processors, focusing on reducing inference times and improving computational efficiency. By integrating these three complementary techniques —high-level DS reduction, low-level DS reduction, and RISC-V-specific compiler optimizations— our approach effectively reduces the design space, enhances execution speed, and enables the efficient deployment of DNNs on RISC-V platforms.

%
%


The proposed DSE methodology operates at the layer level, making it compatible with other model-level optimization techniques. For each FC layer, the methodology generates a list of potential solutions rather than a single one. This flexibility ensures that if the accuracy constraint is not met, alternative solutions can be selected and evaluated from the list.
Our experimental results demonstrate that the proposed methodology reduces the DS by several orders of magnitude. Furthermore, the custom T3F layers achieve an average inference time reduction of three times compared to IREE \cite{iree} and eight times compared to Pluto \cite{pluto}, highlighting the efficiency and effectiveness of our approach.

This research makes several significant contributions: 
\begin{itemize}
    \item It introduces a novel method for efficiently reducing the DS in TTD, which facilitates faster and more effective exploration of factorized solutions

    \item A compiler optimization approach is proposed to accelerate the custom T3F layers of TensorFlow on RISC-V processors

    \item A comprehensive experimental evaluation is conducted on seven CNNs and six LLMs, showcasing that the proposed method not only reduces the exploration space, but also achieves substantial speedups for custom T3F layers
\end{itemize}

The remainder of this paper is organized as follows. Section \ref{sec:background} provides the necessary background information. Section \ref{sec:related-work} reviews relevant literature. Section \ref{sec:methodology} details the proposed methodology, followed by Section \ref{sec:experimental-setup}, which outlines our evaluation framework. Section \ref{sec:results} presents the experimental results.  Finally, Section \ref{sec:conclusion} concludes the paper.







\section{Background}\label{sec:background}
\noindent\textbf{Tensor-Train Decomposition:}
Let a FC layer be given by:
\begin{equation}
y = W x + b,
\end{equation}
with output vector $y\in \R^M$, weight matrix $W \in \R^{M\times N}$, input vector $x\in \R^N$, and bias vector $b\in \R^M$. The computation is performed using the well-known Matrix-Vector Multiplication (MVM) kernel, i.e., $y_j = (\sum^N_{k=1} W_{jk}x_k)+b_j$ . To reduce computational complexity and memory requirements, TTD is leveraged to approximate the FC layer through a sequence of smaller yet structured layers rather than applying it directly.


Following the notation introduced in \cite{Oseledets2011} and \cite{Novikov2015}, $M$ and $N$ are factorized into $d$-dimensional components, where $M = \prod^d_{t=1} m_t$ and $N = \prod^d_{t=1} n_t$; note that $d$ is not a fixed number. By reshaping $W$, $x$, $y$ and $b$, the FC layer is approximated by:
\begin{equation}
\cY(i_1,\ldots,i_d)  = \sum_{j_1=1}^{n_1}\ldots \sum_{j_d=1}^ {n_d} G_1[i_1,j_1]\cdot \ldots \cdot G_d[i_d,j_d] \cX(j_1,\ldots,j_d) 
+ \cB(i_1,\ldots,i_d),\label{eq:tf_Nov}
\end{equation}
where \( G_k[i_k, j_k] \) represents the tensor-train cores of size $r_{k-1}\times r_{k}$, \( \mathcal{B}(i_1, \ldots, i_d) \) is a tensor of shape $n_1\times \ldots \times n_k$ (or $[n_1,\ldots,n_k]$) representing $b$, $\cX$ is the input tensor of shape $m_1\times \ldots\times m_d$ (or $[m_1\times \ldots \times m_d]$) representing $x$, and $\cY$ is the output tensor of shape $n_1\times \ldots\times n_d$ representing $y$. In analogy to \cite{Novikov2015}, the representation given in Equation \eqref{eq:tf_Nov} is referred to as the TTD of the FC layer. The sequence $[r_0,\ldots,r_k]$ is referred to as the TT-ranks of the TTD.

First, the meaning of the general form of TTD, Equation~\eqref{eq:tf_Nov}, in the context of a FC needs to be clarified. Each tensor core $G_l$, $l=1,\ldots,d$, is a set of $r_{l-1}\cdot r_l$ matrices each of dimension $m_l \times n_l$. Each matrix can therefore be indexed by a multi-index $(s_1,s_2) \in r_{l-1} \times r_l$. In Equation~\eqref{eq:tf_Nov} each $G_l[i_l,j_l]$ can be viewed as a $r_{l-1}\times r_l$ matrix with $r_0=r_d=1$. Therefore, $G_1[i_1,j_1]\cdot \ldots \cdot G_d[i_d,j_d]$ contracts to a single number. Each tensor core is represented as a 4-dimensional tensor to capture the factorized input and output dimensions, ensuring structured connectivity within the TTD. Specifically, $G_d[i_d,j_d] = G^{(d)}_{:,i_d,j_d,:}$. Subsequently, the computation is performed as follows:
\begin{equation}
	\cY(i_1,\ldots,i_d) 
 = \sum_{j_1=1}^{n_1}\sum_{s_1=1}^{r_1}G^{(1)}_{1,i_1,j_1,s_1}\dots  \left( \sum_{j_d=1}^{n_d}\sum_{s_d=1}^{r_d}G^{(d)}_{s_{d-1},i_d,j_d,s_d} \cX(j_1,\ldots,j_d)\right)  + \cB(i_1,\ldots,i_d),\label{eq:tf_Nov2}
\end{equation}



To compute the Tensor-Train approximation, the T3F library \cite{t3f} is utilized. T3F employs the simplified Equation~\eqref{eq:tf_Nov2} described above in the process.

Let us give an example. Consider a FC layer from LeNet300 model, of shape $[N, M]$=$[784, 300]$. A valid combination of M and N is $(300=5 \times 5 \times 3 \times 2 \times 2)$ and $(784=2 \times 2 \times 2 \times 7 \times 14)$. Thus, the $m_i$/$n_i$ values are $[m_1,m_2,m_3,m_4,m_5] = [5, 5, 3, 2, 2]$ and $[n_1,n_2,n_3,n_4,n_5] = [2, 2, 2, 7, 14]$. Also assume that the rank list is $[r_0,r_1,r_2,r_3,r_4,r_5] =[1, 10, 10, 10, 10, 1]$. In the rank list, the first and last values are always 1 and the intimidate values are integers specified by the user. In this example, the intermediate ranks are assumed to be 10 \footnote{In principle, the intermediate ranks may vary; however, for simplicity, identical values are used in this example. Hereafter, $R$ will be used in place of the rank list phrase, indicating that the intermediate ranks are all equal to $R$.}.



After specifying the combination shapes ($m_t$/$n_t$ values above) and the rank values, we can use T3F library and in particular the command in Equation~\eqref{eq:random_matrix} to decompose the initial weights array W. 
The T3F library is used to decompose the initial weights array $W$, specifically Equation~\eqref{eq:tf_Nov2} by using the following command: 
\[
\verb|W = t3f.random_matrix([[2, 2, 2, 7, 14], [5, 5, 3, 2, 2]], R=10)|
\label{eq:random_matrix}
\]


The aforementioned command decomposes the $W$ matrix into the following 4-dimensional cores:
\begin{align*}
G^0 = [r_0,n_1,m_1,r_1] =& [1,2,5,10]\\
G^1 = [r_1,n_2,m_2,r_2] =& [10,2,5,10]\\
G^2 = [r_2,n_3,m_3,r_3] =& [10,2,3,10]\\
G^3 = [r_3,n_4,m_4,r_4] =& [10,7,2,10]\\
G^4 = [r_4,n_5,m_5,r_5] =& [10,14,2,1].\\
\end{align*}
The shape of each core $G^{(k)}$ is of size $r_{t-1}\times n_t \times m_t \times r_t$, based on Equation~\eqref{eq:tf_Nov}, Equation~\eqref{eq:tf_Nov2} and \cite{Novikov2015}.
These cores are processed from the bottom to the top, or equivalently from the right to the left in 
Equation~\eqref{eq:tf_Nov} and Equation~\eqref{eq:tf_Nov2}, starting with the core with shape $[r_4,n_5,m_5,r_5]$. The processing involves Einsum~\cite{einsum} operation between the core and the corresponding $\cX$ tensor. 

By utilizing the T3F library for the example provided, the aforementioned FC layer is decomposed into a sequence of $d$ Einsum layers, as illustrated in the Python code in Listing \ref{lst:python}.

T3F introduces a custom layer that utilizes Einsum kernels to perform this approximation and decomposition process.
In principle, the custom layer employs Einsum operations that utilize factorized weight cores and reshaped input to generate the factorized output, using a notation that defines the desired operation. Since the weights in the layer are trainable, the cores, $G^0 - G^4$, are also trainable parameters. This approach ensures that the sum of the core parameters and operations remains small enough, providing improved memory efficiency and reduced FLOPs compared to the initial FC layer.

\begin{lstlisting}[language=Python, frame=single, numbers=left, numbersep=5pt, xleftmargin=1.5em, caption={Example of the decomposition of the FC layer into a sequence of Einsum operations in Python}, captionpos=b, label={lst:python}]
x_re = x.reshape(b_5, n_5, r_5)
Out4 = einsum("r4 n5 m5 r5, b5 n5 r5 -> m5 b5 r4", G_4, x_re)

Output4 = Out4.reshape(b4, n4, r4)
Output3 = einsum("r3 n4 m4 r4, b4 n4 r4 -> m4 b4 r3", G_3, Output4)

Output3 = Output3.reshape(b_3, n_3, r_3)
Output2 = einsum("r2 n3 m_3 r3, b3 n3 r3 -> m3 b3 r2", G_2, Output3)

Output2 = Output2.reshape(b_2, n_2, r_2)
Output1 = einsum("r1 n2 m2 r2, b2 n2 r2 -> m2 b2 r1", G_1, Output2)

Output1 = Output1.reshape(b1, n1, r1)
Output0 = einsum("n1 m1 r1, b1 n1 r1 -> m1 b1", G_0, Output1)

y = flatten(Output0) + b
\end{lstlisting}

Einsum, short for Einstein summation notation, provides a concise way to express tensor operations by explicitly defining how indices are combined. For example, the MVM kernel can be expressed as: 
\[
y = einsum("jk,k->j",W,x)
\]

As noted, the output of an Einsum operation is the input to the next one. To perform the Einsum operation, input must first be reshaped. For instance, in the first Einsum operation, input vector $x\in \R^N$ is reshaped into 3-dimensional tensor of size $[N / (n5*r5), n5, r5]$. In Listing \ref{lst:python}, line 4, $b_5 =  N / (n5*r5)$. The same approach is used for the subsequent layers. In the final stage, the output of the final Einsum is reshaped into a vector (also bias is added), representing the output of the FC layer.

Listing \ref{lst:native} presents the C implementation of the native Einsum operation used in T3F.
A graphical representation of Listing \ref{lst:native} is provided in Figure~\ref{fig:acc}. The figure illustrates how the nested loops traverse the tensors $G$, $Input$, and $Output$. The highlighted red regions illustrate the data access patterns for fixed $(m,b,r)$ loops used to produce a single output element.

In Listing \ref{lst:native}, current rank value $r_{t}$ is shown as $rt$ and the previous rank value $r_{t-1}$ is shown as $rt\_1$. As depicted in Figure~\ref{fig:acc}, for the first Einsum operation,  $rt\_1=1$, which eliminates the need for $k$-loop. Similarly, for the final Einsum operation, $rt=1$, resulting in the absence of $r$-loop.

\begin{lstlisting}[language=C, frame=single, caption={Native C Kernel for $einsum("rnmk,bnk -> mbr", G, Input)$}, label={lst:native}]
for (int m = 0; m < mt; m++)
    for (int b = 0; b < bt; b++) 
        for (int r = 0; r < rt; r++)
            for (int n = 0; n < nt; n++) 
                for (int k = 0; k < rt_1; k++){
          Output[m][b][r] += G[r][n][m][k]*Input[b][n][k];
}
\end{lstlisting}

\begin{figure}
    \centering
    \includegraphics[width=1\linewidth]{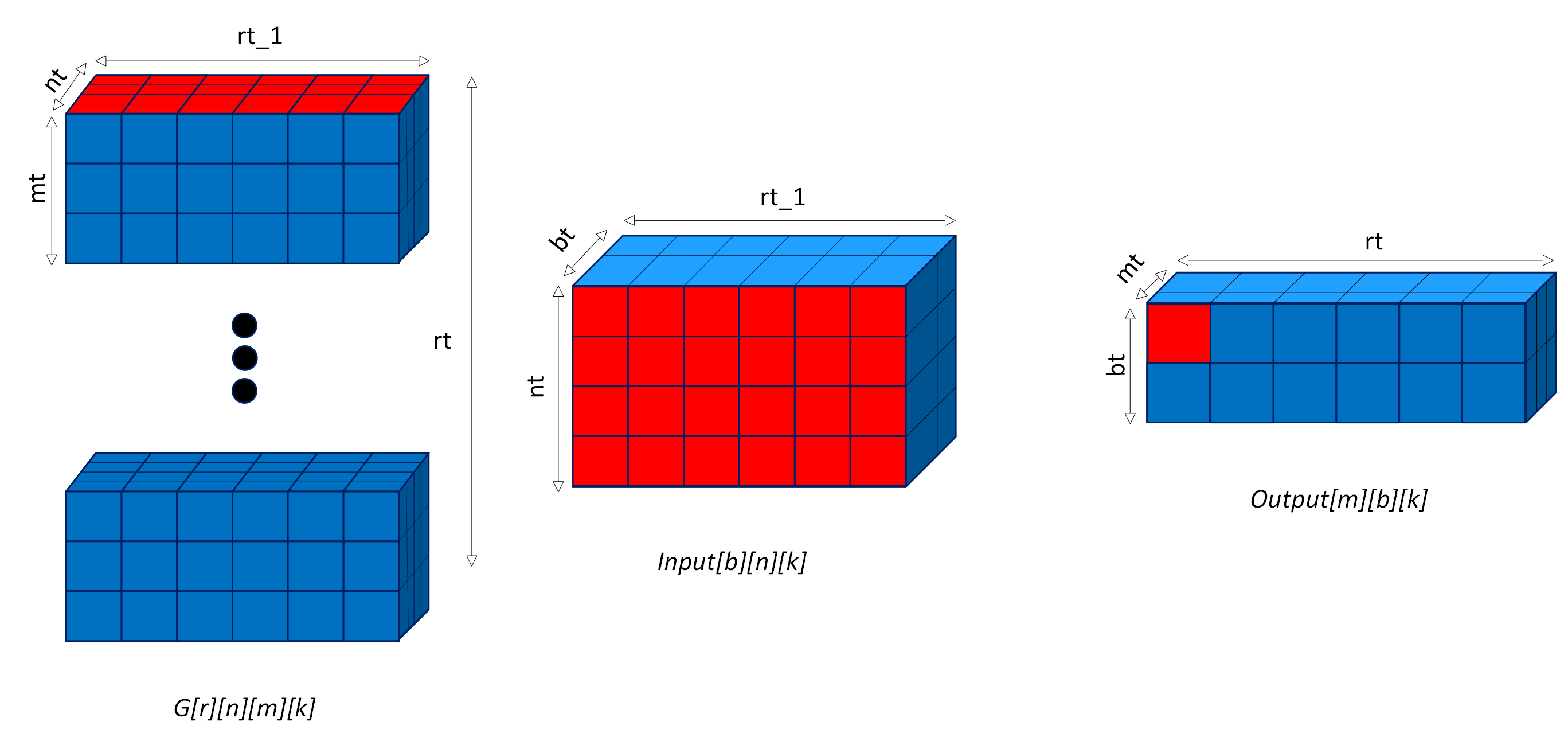}
    \caption{Visual illustration of Listing 2}
    \label{fig:acc}    
\end{figure}


In the following, the equations that determine the number of parameters and FLOPs are outlined. These equations are crucial for selecting the most efficient solutions. 

\noindent\textbf{Computing the number of parameters using TTD:}
The total number of parameters in the unfactorized FC layer is given by $M\times N + M$, where $N \times M$ represents the number of parameters for the weights and $M$ accounts for the bias. 
For the factorized FC layer using TTD, the number of parameters is given by:

\begin{equation} \text{Memory} = M+ \sum_{i=1}^d \left(r_{i-1}  \cdot m_i \cdot n_i \cdot r_i\right),\label{eq:Memory}
\end{equation} 

\noindent where the first term corresponds to the number of parameters for the bias, while the second term accounts for the number of parameters of the weights.
Specifically, there are $d$ cores, $G$ tensors, each of size $r_{t-1}\times n_t \times m_t \times r_t$, contributing to the parameter count.

\noindent\textbf{{Computing FLOPs using TTD:}}
The FLOPs value involves summing the FLOPs across all Einsum operations:
\begin{equation}\label{eq:FLOPs_1}  
\text{FLOPs} = M + \sum_{t=1}^{d} \text{FLOPs}^{(t)},
\end{equation}  
where $M$ accounts for the FLOPs that relate to the bias. In the example presented in Listing \ref{lst:python} there are $d=5$ Einsum operations. The first Einsum operation corresponds to $t=d$ and the last Einsum operation corresponds to $t=1$. Note, the parameter $b_5$ is the result after a reshape layer. Similarly, the parameter $b_1$, essential for calculating the FLOPs in the final Einsum, depends on the parameters of the previous Einsum operations. Therefore, a detailed analysis is required to determine the $b_i$ parameter across each Einsum operation.

\begin{prop}\label{prop:flopsd}
For the first Einsum layer, the FLOPs are given by:
\begin{equation}
\text{FLOPs}^{(d)} = 2 \cdot n_d \cdot r_d \cdot r_{d-1} \cdot m_d \cdot n_1 \cdot \ldots \cdot n_{d-1}.
\label{eq:flops_subterms}
\end{equation}
\end{prop}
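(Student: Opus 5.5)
We count the multiply--add operations in the native kernel of Listing~\ref{lst:native} when it is instantiated for the first Einsum, i.e.\ the one with core $G^{(d)}$ of shape $[r_{d-1},n_d,m_d,r_d]$. The innermost statement performs one multiplication and one addition, contributing $2$ FLOPs. It is executed once per iteration of the loop nest over $(m,b,r,n,k)$. For the first Einsum the loop bounds are $m_t=m_d$, $r_t=r_{d-1}$ (the output rank index), $n_t=n_d$ and $r_{t-1}=r_d$ (the contracted rank index, equal to $1$ since $r_d=1$). The product of these bounds is symmetric in the two rank factors, so the naming convention of the listing versus Equation~\eqref{eq:tf_Nov2} does not matter. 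Hence
\[
\text{FLOPs}^{(d)} = 2\cdot m_d\cdot b_d\cdot r_{d-1}\cdot n_d\cdot r_d ,
\]
and the remaining task is to identify the batch-like bound $b_d$.

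The key step is computing $b_d$ from the reshape on the first line of Listing~\ref{lst:python}. The input vector $x\in\R^N$ is reshaped to $[b_d,n_d,r_d]$, and a reshape preserves the number of entries. Therefore $b_d\, n_d\, r_d = N$, which gives $b_d = N/(n_d r_d)$, consistent with the text's $b_5=N/(n_5 r_5)$. Since $r_d=1$ and $N=\prod_{t=1}^d n_t$, we get $b_d = n_1\cdots n_{d-1}$.

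Substituting this into the displayed count yields
\[
2\, n_d\, r_d\, r_{d-1}\, m_d\, n_1\cdots n_{d-1},
\]
which is exactly Equation~\eqref{eq:flops_subterms}. We keep the factor $r_d$ explicit even though it equals $1$, to match the statement's form and the general pattern used for later layers.

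The only delicate point is the convention rather than the arithmetic. The paper at places swaps the roles of $m_t$ and $n_t$ (the input shape is described both as $m_1\times\cdots\times m_d$ and via $N=\prod n_t$). I would fix the convention used by Listing~\ref{lst:python} and the T3F call: the $n_t$ factor $N$ (input) and the $m_t$ factor $M$ (output). I would also note that each output entry $\text{Output}[m][b][r]$ is a sum of $n_d r_d$ products. Counting $2$ FLOPs per product, rather than $2n_dr_d-1$ per entry, is the standard convention the statement adopts.
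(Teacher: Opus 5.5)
Your argument is correct, but it justifies the count differently from the paper. The paper works with the algebraic expression \eqref{eq:tf_Nov2}. It isolates the innermost subterm $S^{(d)}_{i_d,j_1,\ldots,j_{d-1},s_{d-1}}=\sum_{j_d}\sum_{s_d}G^{(d)}_{s_{d-1},i_d,j_d,s_d}\mathcal{X}_{j_1,\ldots,j_d}$ and notes that it is independent of $i_1,\ldots,i_{d-1}$ and $s_1,\ldots,s_{d-2}$. So it need only be computed once for each of the $m_d\,n_1\cdots n_{d-1}\,r_{d-1}$ relevant index tuples and then reused, and the paper multiplies this by $2n_dr_d$ operations per subterm.

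You instead count trips through the loop nest of Listing~\ref{lst:native} and get the batch bound from size conservation in the reshape of Listing~\ref{lst:python}. The two counts agree because the entries $\mathrm{Output}[m][b][r]$ are exactly the paper's subterms, with $b$ enumerating $(j_1,\ldots,j_{d-1})$.

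The paper's route buys several things:
\begin{itemize}
\item it is implementation-independent;
\item it explains, via reuse, why this is the amount of work needed;
\item it produces the factor $r_d$ formally from the $s_d$-sum without invoking $r_d=1$;
\item it sets up the recursively defined $S^{(t)}$ used for the general formula in the next proposition.
\end{itemize}
Your route ties the formula to the kernel that is actually executed and later optimized, and it makes the batch dimension explicit. The cost is that you use $r_d=1$ essentially, to pass from $N/(n_dr_d)$ to $n_1\cdots n_{d-1}$. That is harmless, since $r_0=r_d=1$ is part of the setup. Your approach also extends to later layers by tracking $b_t=m_{t+1}b_{t+1}/n_t$ through the successive reshapes.
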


\begin{proof}
To derive the FLOPs equation, consider the subterm from Equation~\eqref{eq:tf_Nov2}:
\begin{equation}
S^{(d)}_{i_d,j_1,\ldots,j_{d-1},s_{d-1}} :=
\sum_{j_d=1}^{n_d} \sum_{s_d=1}^{r_d} G^{(d)}_{s_{d-1},i_d,j_d,s_d} \mathcal{X}_{j_1,\ldots,j_d}.
\label{eq:tf_core}
\end{equation}

The subterm \( S^{(d)} \) is independent of \( i_1, \ldots, i_{d-1} \) and \( s_1, \ldots, s_{d-2} \). Thus, it suffices to compute this term for \( i_d = 1, \ldots, m_d \) and for all \( j_1 = 1, \ldots, n_1 \), \( j_2 = 1, \ldots, n_2 \), up to \( j_{d-1} = 1, \ldots, n_{d-1} \), and \( s_{d-1} = 1, \ldots, r_{d-1} \). These computations can then be reused for subsequent steps.

For each of the \( m_d \cdot n_1 \cdot n_2 \cdot \ldots \cdot n_{d-1} \cdot r_{d-1} \) subterms \( S^{(d)}_{i_d,j_1,\ldots,j_{d-1},s_{d-1}} \), the computation in Equation~\eqref{eq:tf_core} involves:
\begin{equation}
S^{(d)}_{i_d,j_1,\ldots,j_{d-1},s_{d-1}} = 
\sum_{j_d=1}^{n_d} \sum_{s_d=1}^{r_d} G^{(d)}_{s_{d-1},i_d,j_d,s_d} \mathcal{X}_{j_1,\ldots,j_d}.
\end{equation}

This requires \( n_d \cdot r_d \) multiplications and \( n_d \cdot r_d \) additions per subterm. Thus, the total FLOPs for all subterms are computed by multiplying the FLOPs per subterm, \( 2 \cdot n_d \cdot r_d \), with the number of subterms, \( r_{d-1} \cdot m_d \cdot n_1 \cdot \ldots \cdot n_{d-1} \). This results in:
\begin{equation}
\text{FLOPs}^{(d)} = 2 \cdot n_d \cdot r_d \cdot r_{d-1} \cdot m_d \cdot n_1 \cdot \ldots \cdot n_{d-1}.
\label{eq:flops_subterms}
\end{equation}
\end{proof}

\begin{rmk}
If \( r_d = 1 \), Equation~\eqref{eq:flops_subterms} simplifies to:
\begin{equation}
\text{FLOPs}^{(d)} = 2 \cdot n_d \cdot r_{d-1} \cdot m_d \cdot n_1 \cdot \ldots \cdot n_{d-1}.
\end{equation}
\end{rmk}

\begin{prop}
The total FLOPs are given by:
\begin{equation}
\text{FLOPs} = M + \sum_{t=1}^{d} 2 \cdot r_t \cdot r_{t-1} \cdot m_t \cdot \ldots \cdot m_d \cdot n_1 \cdot \ldots \cdot n_t.
\label{eq:FLOPs_general}
\end{equation}
\end{prop}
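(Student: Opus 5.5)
We prove the formula by induction over the Einsum stages, processed in the order the paper uses: from $t=d$ down to $t=1$, i.e.\ right to left in Equation~\eqref{eq:tf_Nov2}. Proposition~\ref{prop:flopsd} already gives the base case. For $t=d$ it yields $2\, n_d r_d r_{d-1} m_d\, n_1\cdots n_{d-1}$. Since
\[
m_d\, n_1\cdots n_{d-1}\, n_d = m_d\cdots m_d\cdot n_1\cdots n_d,
\]
this is exactly the $t=d$ summand $2\, r_d r_{d-1}\, m_d\, n_1\cdots n_d$. Together with the bias contribution $M$ from Equation~\eqref{eq:FLOPs_1}, it then suffices to show that the $t$-th Einsum stage costs $\text{FLOPs}^{(t)} = 2\, r_t r_{t-1}\, m_t\cdots m_d\, n_1\cdots n_t$.

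For the inductive step, define the nested intermediate subterms in analogy with Equation~\eqref{eq:tf_core}:
\[
S^{(t)}_{i_t,\ldots,i_d,\,j_1,\ldots,j_{t-1},\,s_{t-1}} := \sum_{j_t=1}^{n_t}\sum_{s_t=1}^{r_t} G^{(t)}_{s_{t-1},i_t,j_t,s_t}\, S^{(t+1)}_{i_{t+1},\ldots,i_d,\,j_1,\ldots,j_t,\,s_t},
\]
with $S^{(d+1)} := \mathcal{X}$. The key observation is that once stage $t+1$ has been computed, the sum over $j_{t+1}$ and $s_{t+1}$ has been carried out. Those indices disappear, and the output index $i_{t+1}$ takes their place. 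So $S^{(t)}$ depends only on the free indices $i_t,\ldots,i_d$, $j_1,\ldots,j_{t-1}$ and $s_{t-1}$. It is independent of $i_1,\ldots,i_{t-1}$ and of $s_1,\ldots,s_{t-2}$, which allows reuse exactly as in the proof of Proposition~\ref{prop:flopsd}.

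We then count as in that proof. The number of distinct subterms is $m_t\cdots m_d\cdot n_1\cdots n_{t-1}\cdot r_{t-1}$, and each costs $n_t r_t$ multiplications plus $n_t r_t$ additions. Multiplying gives $2\, r_t r_{t-1}\, m_t\cdots m_d\, n_1\cdots n_t$. For $t=1$ we have $r_0=1$, and $S^{(1)}$ indexed by $(i_1,\ldots,i_d)$ is exactly $\mathcal{Y}-\mathcal{B}$. Adding the bias costs $M=\prod m_t$ FLOPs, and summing over $t$ gives Equation~\eqref{eq:FLOPs_general}.

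This matches the reshape picture of Listing~\ref{lst:python}. There the batch dimension $b_t$ of stage $t$ equals $m_{t+1}\cdots m_d\, n_1\cdots n_{t-1}$, consistent with the loop bounds $m_t\cdot b_t\cdot r_{t-1}\cdot n_t\cdot r_t$ in Listing~\ref{lst:native}. One can verify this by tracking tensor sizes through the reshapes: the output of stage $t+1$ has $m_{t+1}\, b_{t+1}\, r_t$ entries, and it is reshaped to $(b_t, n_t, r_t)$.

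The main obstacle is the bookkeeping of which indices remain free after each contraction. We must justify that the output index $i_{t+1}$ replaces $j_{t+1}$ in the index set, so that the $m$-factors accumulate from the right while the $n$-factors shrink from the left. Once that is established, the rest follows the same per-subterm count as Proposition~\ref{prop:flopsd}. We also adopt that proposition's convention that each multiply-add counts as 2 FLOPs, without subtracting the one fewer addition per sum.
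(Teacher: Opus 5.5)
Your proposal is correct and follows essentially the same route as the paper: you define the intermediate contractions $S^{(t)}$ recursively from $S^{(t+1)}$, count the $m_t\cdots m_d\, n_1\cdots n_{t-1}\, r_{t-1}$ distinct subterms at each stage, charge $2\,n_t r_t$ FLOPs per subterm, and add $M$ for the bias. Your additions are consistent with the paper's argument but not needed for it; they are setting $S^{(d+1)}:=\mathcal{X}$ so that $t=d$ is handled uniformly, identifying $S^{(1)}$ with $\mathcal{Y}-\mathcal{B}$, and cross-checking $b_t = m_{t+1}\cdots m_d\, n_1\cdots n_{t-1}$ against the reshapes in Listing~\ref{lst:python}.
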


\begin{proof}
Assume that \( S^{(t+1)}_{i_{t+1},\ldots,i_d,j_1,\ldots,j_t,s_t} \) has been computed for all indices for \( t = 1, \ldots, d-1 \). Analogous to \( S^{(d)} \), each of the \( m_t \cdot \ldots \cdot m_d \cdot n_1 \cdot \ldots \cdot n_{t-1} \cdot r_{t-1} \) subterms can be computed as:
\begin{equation}
S^{(t)}_{i_t,\ldots,i_d,j_1,\ldots,j_{t-1},s_{t-1}} := 
\sum_{j_t=1}^{n_t} \sum_{s_t=1}^{r_t} 
G^{(t)}_{s_{t-1},i_t,j_t,s_t} 
S^{(t+1)}_{i_{t+1},\ldots,i_d,j_1,\ldots,j_t,s_t}.
\label{eq:intermediate_einsum}
\end{equation}

For each subterm, this involves \( n_t \cdot r_t \) multiplications and \( n_t \cdot r_t \) additions. The FLOPs for each level \( t \) are therefore:
\begin{equation}
\text{FLOPs}^{(t)} = 2 \cdot r_t \cdot r_{t-1} \cdot m_t \cdot \ldots \cdot m_d \cdot n_1 \cdot \ldots \cdot n_t.
\label{eq:flops_othersubterms}
\end{equation}

Adding the \( M \) FLOPs required for \( \mathcal{B} \), the total FLOPs are:
\begin{equation}
\text{FLOPs} = M + \sum_{t=1}^{d} \text{FLOPs}^{(t)} 
= M + \sum_{t=1}^{d} 2 \cdot r_t \cdot r_{t-1} \cdot m_t \cdot \ldots \cdot m_d \cdot n_1 \cdot \ldots \cdot n_t.\qedhere
\label{eq:FLOPs_general_final}
\end{equation}
\end{proof}

\section{Related Work}\label{sec:related-work}

The challenge of deploying DNNs on resource-constrained devices has led to significant research in model compression and optimization techniques. Among these, LRF is a well-known approach that reduces the size of weight matrices/tensors by approximating them with lower-dimensional components. Several studies have demonstrated the effectiveness of LRF in compressing DNN models, leading to reduced memory usage and lower inference times. 
Various LRF techniques have been proposed to decompose 2-dimensional matrices (FC layers). These include SVD \cite{svd}, QR decomposition \cite{qr}, interpolative decomposition \cite{interpolative}, and non-negative matrix factorization \cite{non-negative}. Since tensors extend matrices to higher dimensions, specialized methods such as Tucker Decomposition \cite{tucker}, CP decomposition \cite{cp}, and TTD \cite{Oseledets2011} are employed for tensor decomposition. An alternative approach involves reshaping tensors into 2-dimensional matrices and applying conventional matrix decomposition methods \cite{reshape-tensor} or vise versa. Recent efforts, such as \cite{t3f}, leverage TTD and tensor contraction to incorporate LRF into 2-dimensional matrices (FC layers), achieving further compression by reducing the number of parameters.

The integration of LRF in DNNs has been widely explored \cite{LRF3, manual1, genetic}. Many studies have adopted a manual rank selection approach \cite{LRF3, LRF4, manual1, manual2}, where the principal challenge is determining the optimal rank for each layer to balance compression rate and model accuracy effectively. Another significant challenge is the computational overhead, as each solution requires retraining or iterative fine-tuning. To mitigate the limitations of manual rank selection, various automated approaches have been proposed. For example, analytical methods such as Variational Bayesian Matrix Factorization (VBMF) \cite{VBMF} and global optimization strategies based on machine learning, such as Bayesian Optimization \cite{bayesopt}, have been applied to automate layer-wise rank selection \cite{VBMF1, VBMF2}. However, these techniques are limited by their focus on individual layer weight tensors, neglecting important considerations: i) constraints imposed by the target device and ii) potential for code-level optimizations.



There has also been significant interest in DSE for DNN optimization. DSE methods aim to navigate the vast configuration space of combination shape, rank list, architectural choices, and hardware settings to identify the optimal trade-offs between performance, accuracy, and resource usage. 
Previous works such as \cite{milad1, milad2, milad3} address the challenge of large search space and rank selection in FC layers through a DSE methodology. However, their solutions are limited to CNN models and do not consider large models with vast FC layers since the DS is huge. 
Unlike prior studies focusing on CNNs, \cite{vision-transformer} applies an iterative, greedy selection metric to determine layer ranks in a transformer-based model (Vision Transformer). However, this approach remains specific to a single model type. Moreover, these approaches do not take into account the target device and code-level optimizations, which are critical for efficient deployment on real hardware.

Beyond compression, several works have focused on code optimization strategies to enhance the execution efficiency of DNNs on various hardware platforms. One main goal of this work is to optimize the T3F library for a RISC-V architectures. Manually optimized libraries such as OneDNN \cite{onednn}, cuDNN \cite{cudnn}, and BLIS \cite{blis} are meticulously fine-tuned to harness the full potential of the underlying hardware, with cuDNN specifically optimized for NVIDIA GPUs, while OneDNN and BLIS provide extensive optimizations, including support for the RISC-V architecture. Currently, no vendor optimized library supports the T3F library, as its primary objective is to facilitate the process of LRF rather than optimizing the execution time of the process.


Given the vast exploration space and the current limitations of compilers and vendor libraries, semi-automatic optimization approaches, e.g., Pluto \cite{pluto}, PoCC \cite{pocc}, PPCG \cite{ppcg}, and Tiramisu \cite{tiram}, have emerged to optimize affine loop kernels. These tools leverage the polyhedral model for dependency analysis and transformations, enabling efficient parallelization and locality optimizations.

In recent years, several graph-based or deep learning (DL) compilers have been developed, including GLOW \cite{glow}, IREE \cite{iree}, and TFLM \cite{tflm}. These DL compilers are designed to convert high-level, typically graph-based, descriptions of DNNs into executable code that can run on target hardware platforms, such as CPUs, GPUs, or accelerators. This process involves a sequence of optimizations aimed at minimizing arithmetic operations and memory usage, thereby enhancing efficiency.

For the first time, this research introduces a novel and holistic DSE methodology that integrates LRF compression with tailored compiler optimization strategies specifically designed for RISC-V platforms. Unlike existing approaches, which often overlook the target platform or fail to incorporate compiler-level optimizations, the proposed methodology simultaneously reduces the DS and optimizes the T3F library's inference execution time on RISC-V architectures. In contrast to prior works such as \cite{milad1, milad2}, which neglect platform-tailored optimizations, and \cite{milad3}, which requires the  execution of multiple LRF kernels on the target platform, our approach leverages advanced compiler optimization strategies to further enhance loop transformations, memory efficiency, and parallelization. 



\section{Proposed methodology}\label{sec:methodology}

This paper introduces an LRF DSE methodology and a specialized design tool tailored for FC layers in DNNs. The primary objectives are to efficiently factorize FC layers using the TTD method and to generate optimized source code for RISC-V-based platforms to achieve minimal inference times. The methodology takes as input the shape of the target FC layer and the RISC-V hardware details. This methodology can be extended to other processor families, too.

It is important to emphasize that generating all possible LRF solutions and calculating their corresponding FLOPs and parameter values is not practical. Constructing the parameter vs. FLOPs DS involves two key steps: (i) extracting and generating all possible combination shapes, and (ii) for each combination shape, generating all LRF solutions using the full range of rank values from the rank list. Given the infeasibility of this exhaustive approach, an LRF methodology is proposed, shown in Figure~\ref{fig:methodology}, that comprises of three key steps.


First, the DS is pruned by considering the FLOPs/memory efficiency trade-offs. A new  method is proposed that prunes the TTD combination shapes that exhibit high FLOPs and memory demands. The most important part of the first step is that it is performed without calculating the exact memory and FLOPs values.
Second, the DS is pruned by focusing on inference efficiency. Using a set of heuristics, solutions that fail to achieve low inference times are identified and are excluded from the DS to streamline the process.
Last, a methodology is proposed for efficiently applying compiler optimizations to the custom T3F layers. This final step involves optimizing the remaining TTD solutions using compiler optimization techniques to ensure peak performance.


\begin{figure*}[htbp]
\centerline{\includegraphics[width=1\textwidth]{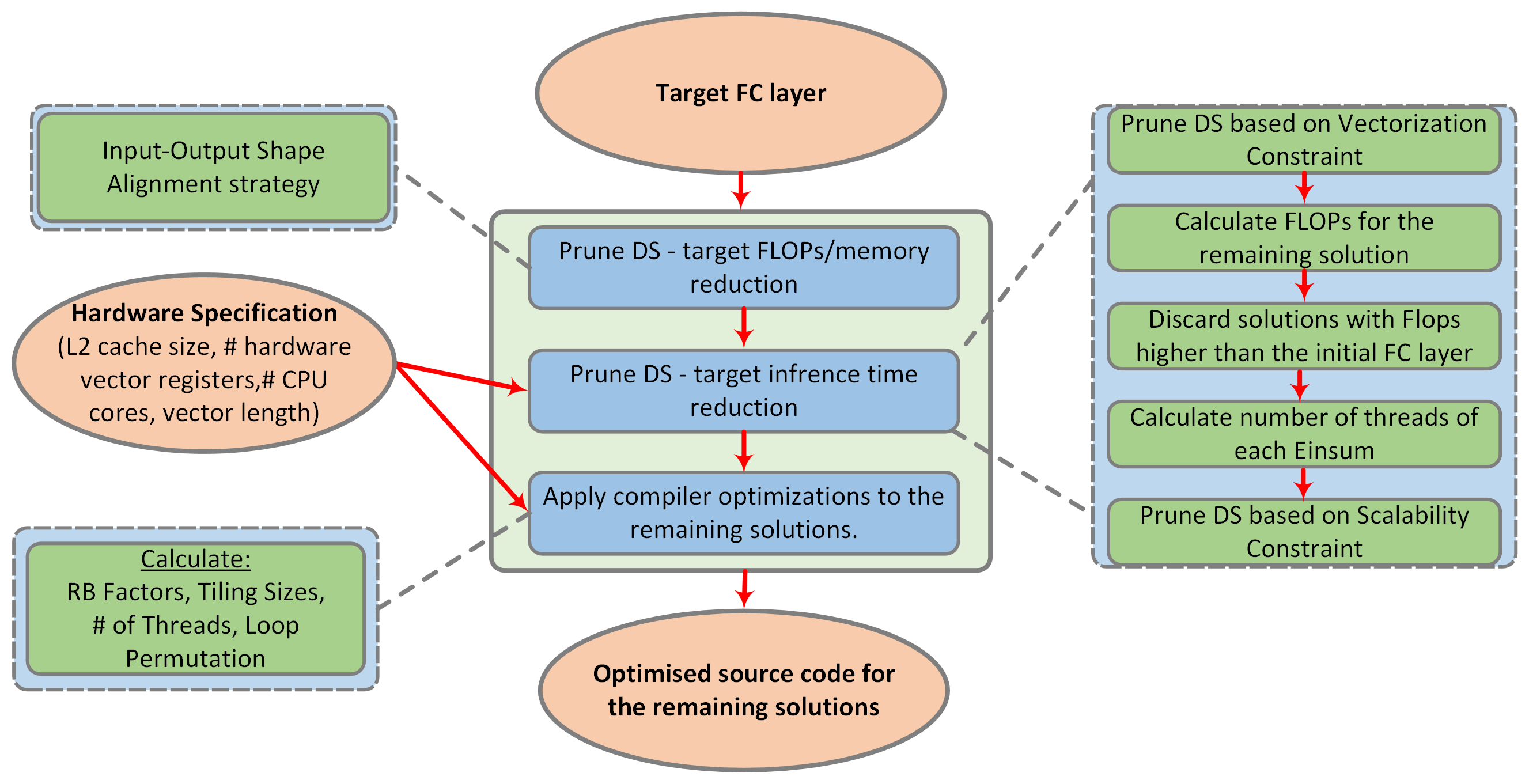}}
\caption{The proposed DSE methodology}
\label{fig:methodology}
\end{figure*}


The proposed layer-wise methodology is designed to be integrated seamlessly with any model-wise approach. For models containing multiple FC layers, the model-wise methodology can apply the proposed approach individually to each layer. Instead of producing a single solution per FC layer, the proposed methodology generates a list of potential solutions. This flexibility allows for adjustments if the accuracy constraint is not initially met, enabling the selection of alternative solutions from the list to achieve the desired accuracy while maintaining efficiency.

LRF is a well-established technique, and prior studies have shown that the accuracy degradation introduced by rank reduction can be effectively addressed through fine-tuning or retraining from scratch~\cite{VBMF1, vision-transformer}. In our previous works~\cite{milad1, milad2, milad3}, we explored strategies for balancing compression ratio and accuracy preservation, including post-factorization fine-tuning and rank selection methods.
The methodology presented in this paper builds upon those findings, but a detailed accuracy analysis lies beyond the scope of the current study and is left for future work.


\subsection{Prune the DS based on FLOPs vs. memory trade-offs}

As discussed in Section 2, each LRF solution generated using the T3F library relies on a combination set that reshapes the original matrix and an associated list of rank values. To ensure no efficient\footnote{In terms of memory and/or FLOPs} solutions are excluded from the DS, it is essential to explore all feasible reshaping combinations for the original matrix and determine upper bounds for the rank values. These upper bounds are specific to each combination shape and must be extracted individually.
However, this process is extremely time-consuming, even for small layers, as it requires calculating FLOPs and parameters for each combination shape and each set of rank values. This is why a new approach is proposed in this work that reduces the DS without calculating these values.

The objective of this subsection is to identify the permutation with the minimum FLOPs among all possible permutations for each combination shape. Notably, each combination shape can have multiple permutations, each with distinct FLOP requirements. 
To streamline this process and focus on the most efficient solutions, we propose an input-output shape alignment strategy. This approach reduces the number of configurations to evaluate and effectively prunes the DS, ensuring only solutions with the lowest FLOPs are retained for each parameter level.

\begin{defn}\label{def:aligned}
The combination shapes are referred to as {\em aligned} if \( n_1 \leq n_2 \leq \ldots \leq n_d \) and \( m_1 \geq m_2 \geq \ldots \geq m_d \). We also say that the input shape is {\em aligned} with the output shape.
\end{defn}

The rationale behind aligning the combination shapes is explained in the following proposition.

\begin{prop}\label{prop:strategy}
The dimensions \( m_s \) and \( n_s \) appear as factors in the summands of FLOPs given by Equation~\eqref{eq:FLOPs_general} exactly \( s \) and \( d-s+1 \) times, respectively.
\end{prop}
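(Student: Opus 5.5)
The plan is a direct count based on the closed form in Equation~\eqref{eq:FLOPs_general}. The summand with index $t$ is
\[
2\, r_t\, r_{t-1}\, m_t \cdots m_d\, n_1 \cdots n_t .
\]
So the $t$-th summand contains exactly the output factors $m_t,\ldots,m_d$ and the input factors $n_1,\ldots,n_t$, each once. The constant $M$ contains no individual $m_s$ or $n_s$ as a factor of a summand, and the rank factors are independent of the shapes. Both can be ignored, and the proof reduces to counting, for each fixed $s$, the indices $t\in\{1,\ldots,d\}$ whose index ranges contain $s$.

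The first step handles the output dimension $m_s$. The factor $m_s$ appears in summand $t$ exactly when $t\le s\le d$, that is, when $t\in\{1,\ldots,s\}$, which gives $s$ summands. The second step handles the input dimension $n_s$. The factor $n_s$ appears in summand $t$ exactly when $1\le s\le t$, that is, when $t\in\{s,\ldots,d\}$, which gives $d-s+1$ summands.

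The only point needing care is bookkeeping. We must check that each factor appears at most once in a given summand, which holds because the products run over contiguous index ranges with no repetition. We must also check that the index conventions match Proposition~\ref{prop:flopsd} (the first Einsum corresponds to $t=d$) together with the FLOPs$^{(t)}$ formula proved afterwards. It would also help to note the interpretation: $m_s$ with small $s$ and $n_s$ with large $s$ appear in few summands. This is what motivates Definition~\ref{def:aligned}, since placing large factors where they are multiplied least often reduces FLOPs. There is no genuine obstacle; the counting is immediate once the summand structure is written out.
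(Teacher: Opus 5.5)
Your proposal is correct and follows the paper's proof: both read off from the summand $\text{FLOPs}^{(t)} = 2 r_t r_{t-1} m_t\cdots m_d\, n_1\cdots n_t$ that $m_s$ occurs exactly for $t\le s$ and $n_s$ exactly for $t\ge s$, giving $s$ and $d-s+1$ occurrences. One small correction: $M=m_1\cdots m_d$ does contain every $m_s$, so the right reason to set it aside is that the count refers to the summands $\text{FLOPs}^{(t)}$ of the sum (and $M$ does not change when the $m_s$ are permuted), not that $M$ lacks these factors.
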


\begin{proof}
For a fixed index \( s \), the term \( m_s \) (representing a dimension of the output tensor) appears in \( s \) summands \( \text{FLOPs}^{(t)} \) from Equation~\eqref{eq:flops_othersubterms} for \( t = 1, \ldots, s \). This is because \( m_s \) contributes to the product \( m_t \cdot \ldots \cdot m_d \) whenever \( t \leq s \). In other words, \( m_s \) is included in every summand \( \text{FLOPs}^{(t)} \) starting from \( t = 1 \) up to \( t = s \).

Conversely, the term \( n_s \) (representing a dimension of the input tensor) appears in \( d-s+1 \) summands \( \text{FLOPs}^{(t)} \) for \( t = s, \ldots, d \). This is because \( n_s \) contributes to the product \( n_1 \cdot \ldots \cdot n_t \), which is part of the summand whenever \( t \geq s \). Thus, \( n_s \) is included in every summand \( \text{FLOPs}^{(t)} \) starting from \( t = s \) up to \( t = d \).
\end{proof}

Proposition~\ref{prop:strategy} suggests that a good strategy is to ensure that \( m_s \) decreases for larger \( s \), while \( n_s \) increases for larger \( s \). This is further emphasized by examining the interaction between \( m_s \) and \( n_u \) across summands for $s=t,\ldots,d$ and $u=1,\dots,t-1$. Specifically, \( n_u \) for smaller indices \( u \) is multiplied by \( m_s \) for larger indices \( s \) in the form:
\begin{equation}
n_1 \cdot \ldots \cdot n_{t-1} \cdot m_t \cdot \ldots \cdot m_d.
\end{equation}
This ensures that smaller values of \( n_u \) are multiplied with smaller values of \( m_s \), thereby reducing the overall FLOPs. To minimize the total FLOPs, we propose aligning the input and output shapes as described in Definition~\ref{def:aligned}.




While this heuristic effectively reduces the number of FLOPs, it is equally important to evaluate its impact on the number of parameters. 
Figure~\ref{fig:FLOPsMemory1} and Figure~\ref{fig:FLOPsMemory2} illustrate FLOPs and memory footprints for all shape permutations across six configurations. The proposed aligned shape permutation is highlighted in red. As mentioned, R=4 means that $[r_0,\ldots,r_d]=[1,4,\ldots,4,1]$. 
As it is shown in these graphs, the proposed aligned shape always achieves the minimum FLOPs value and a memory value that is close to the minimum.

\begin{figure}[ht]
	\includegraphics[clip,width=\textwidth,trim={80 150 300 340}]{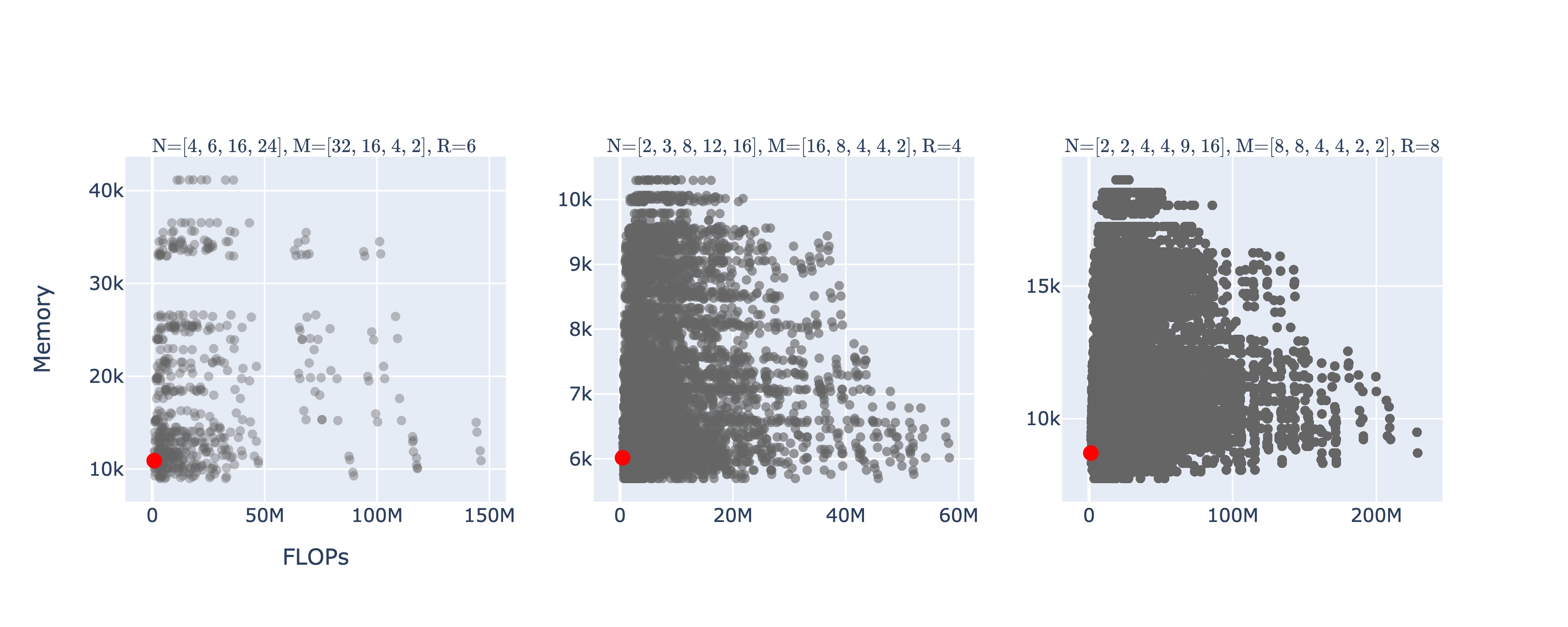}
	\caption{Scatterplots illustrating FLOPs and memory for all shape permutations across three distinct configurations for a CNN layer with $(M,N) = (9216,4096)$\label{fig:FLOPsMemory1}. The proposed aligned shape permutation is highlighted in red}
\end{figure}

\begin{figure}[ht]
	\includegraphics[clip,width=\textwidth,trim={80 150 300 340}]{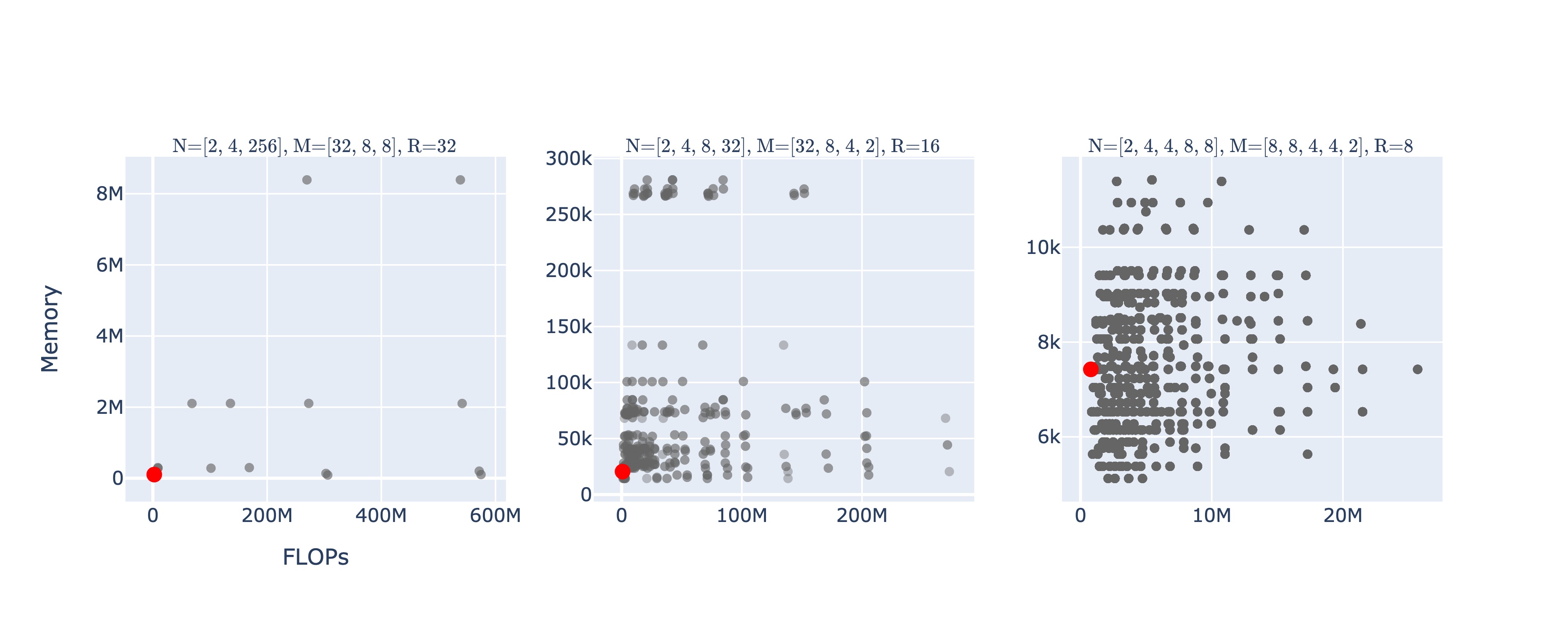}
	\caption{Scatterplots illustrating FLOPs and Memory for all shape permutations across three distinct configurations for a LLM layer with $(M,N) = (2048,2048)$\label{fig:FLOPsMemory2}. The proposed aligned shape permutation is highlighted in red}
\end{figure}


To evaluate the effectiveness of the proposed input-output shape alignment strategy, we assess how well the aligned combination shape performs compared to the maximal and minimal FLOPs/memory values across all permutations \(\sigma\) of the aligned combination shape. Specifically, we compute the following ratios:

\begin{equation}\label{eq:ratio_flops}
\text{ratio}_\text{FLOPs} = \frac{\max_{\sigma}(\text{FLOPs}) - \text{FLOPs}_\text{aligned}}{\max_{\sigma}(\text{FLOPs}) - \min_{\sigma}(\text{FLOPs})},
\end{equation}
and
\begin{equation}\label{eq:ratio_memory}
\text{ratio}_\text{Memory} = \frac{\max_{\sigma}(\text{Memory}) - \text{Memory}_\text{aligned}}{\max_{\sigma}(\text{Memory}) - \min_{\sigma}(\text{Memory})}.
\end{equation}


To evaluate the impact of shape alignment and rank selection in TTD, we constructed a benchmark comprising 374,256 valid TTD configurations. We selected all the studied FC layer shapes (see Tables~\ref{tab:CNNmodels} and~\ref{tab:LLMmodels}); these layers are taken from common CNNs and transformer models. For each of these layers, we generated all compatible aligned input and output tensor shapes as defined in Definition~\ref{def:aligned}, ensuring that the original layer dimensions were preserved while enabling TTD. Each aligned shape pair was then combined with a range of TTD ranks, spanning from 1 to 3064\footnote{The maximum possible rank is inherently constrained by the dimensionality of the input and output tensors, and therefore varies depending on their respective shapes} and increasing primarily in steps of 8.

Every configuration corresponds to a unique combination of input shape, output shape, and TTD rank. The efficiency of alignment is assessed in Figure~\ref{fig:FLOPsMemoryBox} using normalized FLOPs and memory ratios (Equations~\eqref{eq:ratio_flops} and~\eqref{eq:ratio_memory}), which compare each aligned configuration to all valid permutations of input and output shapes. Ratios close to 1 indicate near-optimal compression and computational performance.

Figure~\ref{fig:FLOPsMemory1} and Figure~\ref{fig:FLOPsMemory2} illustrate that the aligned shape permutation consistently minimizes FLOPs, while this is not always the case for memory. The boxplots in Figure~\ref{fig:FLOPsMemoryBox} confirm that the proposed aligned combination shape is optimal with respect to the number of FLOPs, as evidenced by the boxplot for the FLOPs ratio. In fact, the FLOPs ratio boxplot, including whiskers and outliers, reduces to a single line at $1.0$, indicating that the aligned shape permutation is always optimal in terms of FLOPs.

For the memory ratio, the thin box starting at 1 indicates that the data is highly concentrated. Additionally, our computations reveal that about 30\% of all aligned shapes have \(\text{ratio}_\text{Memory}=1\), that is, they are optimal with respect to memory. The box plot demonstrates that, while the aligned combination shape does not guarantee memory optimality in every case, it is almost optimal in the vast majority of instances.

\begin{figure}[ht]
\includegraphics[scale=0.4]{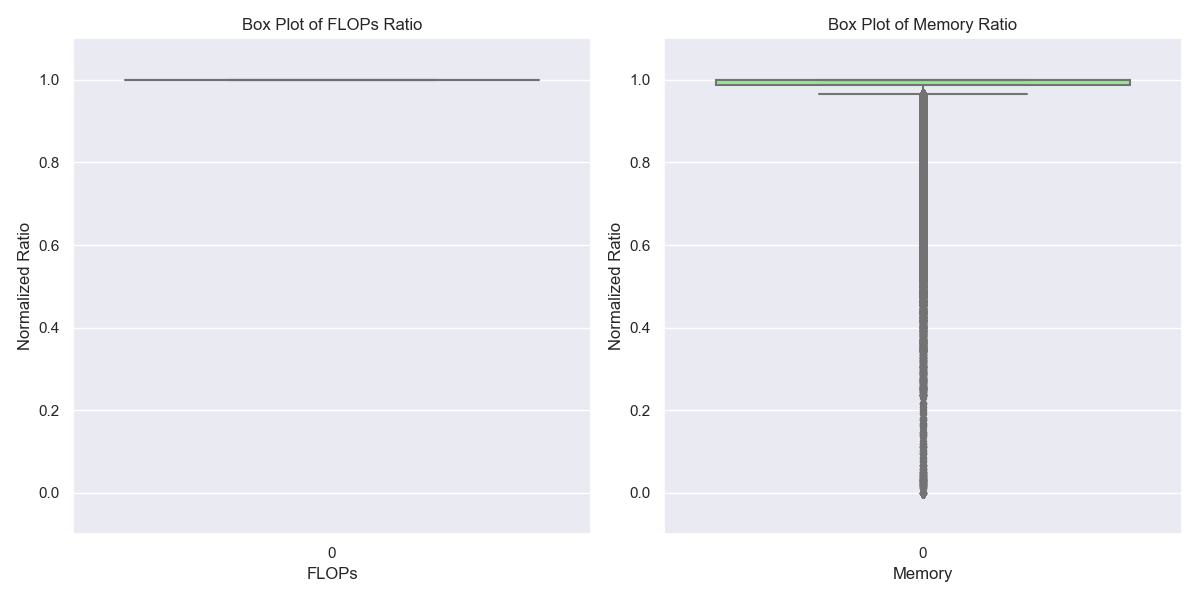}
\caption{Boxplots for FLOPs and nemory ratios compared to the minimal and maximal values across all permutations of the 374,256 cases considered in this study. A normalized ratio of 1 and 0 corresponds to a minimal value and maximal value, respectively\label{fig:FLOPsMemoryBox}}
\end{figure}

The effectiveness of this technique in terms of DS is evaluated by calculating the number of permutations for each configuration. This is formalized in the following proposition as a combinatorial exercise:

\begin{prop}
    Consider aligned combination shapes \([m_1, \ldots, m_d]\) and \([n_1, \ldots, n_d]\). Let \(j\) denote the total number of unique values in \([m_1, \ldots, m_d]\) and \([n_1, \ldots, n_d]\), and let the multiplicities of these unique values be \(k_1, k_2, \ldots, k_j\). Then the total number of permutations of \([m_1, \ldots, m_d]\) and \([n_1, \ldots, n_d]\) is given by:
    \[
    \frac{(d!)^2}{k_1! k_2! \ldots k_j!}.
    \]
    \label{prop:permutations}
\end{prop}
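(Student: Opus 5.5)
The plan is to count the permutations of the two lists separately and then multiply. A configuration is obtained by independently choosing a rearrangement $\sigma$ of $[m_1,\ldots,m_d]$ and a rearrangement $\tau$ of $[n_1,\ldots,n_d]$. Both lists must keep the same length $d$, but no coupling between $\sigma$ and $\tau$ is imposed. So the set of configurations is the Cartesian product of the two sets of distinct rearrangements, and its size is the product of their sizes.

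For a single list, I will use the standard multinomial count. The symmetric group $S_d$ acts on the positions of $[m_1,\ldots,m_d]$. The distinct rearranged lists form the orbit of the aligned list. The stabilizer of that list consists of the permutations that only shuffle positions carrying equal values. If the distinct values of the $m$-list have multiplicities $a_1,\ldots,a_p$, the stabilizer is isomorphic to $S_{a_1}\times\cdots\times S_{a_p}$. By orbit--stabilizer the orbit therefore has size
\[
\frac{d!}{a_1!\cdots a_p!}.
\]
Equivalently, one chooses which $a_1$ positions receive the first value, then which $a_2$ of the remaining positions receive the second, and so on. The same argument for the $n$-list, with multiplicities $b_1,\ldots,b_q$, gives
\[
\frac{d!}{b_1!\cdots b_q!}.
\]
Multiplying the two counts gives $(d!)^2/(a_1!\cdots a_p!\,b_1!\cdots b_q!)$.

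The main obstacle is bookkeeping: matching this product to the statement's $k_1,\ldots,k_j$. I will read the $j$ unique values as pairs (list, value), so that $(k_1,\ldots,k_j)=(a_1,\ldots,a_p,b_1,\ldots,b_q)$ and $j=p+q$. With this reading the formula follows at once.

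I will state this convention explicitly in the proof. Merging equal values across the two lists into a single multiplicity gives a wrong count. For example, $m=n=[2,2]$ has exactly one configuration, but a pooled multiplicity of $4$ would give $(2!)^2/4!=1/6$. The alignment in Definition~\ref{def:aligned} plays no role in the count. It only fixes a canonical representative of each orbit, which is why the count measures how much the alignment strategy reduces the DS.
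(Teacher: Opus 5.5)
Your argument is correct and matches the paper's approach: the paper calls this result a ``combinatorial exercise'' and gives no written proof, and your per-list multinomial count (via orbit--stabilizer), multiplied over the two independently permuted lists, is that exercise. Your reading of the $k_i$ as multiplicities of (list, value) pairs also agrees with the paper's worked example. There the value $2$ gets multiplicity $2$ in the $m$-list and $3$ in the $n$-list separately, giving $30 \cdot 20 = 600$.
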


By choosing the aligned combination, where \(m_1 \geq m_2 \geq \ldots \geq m_d\) and \(n_1 \leq n_2 \leq \ldots \leq n_d\), the DS is reduced by a factor of \((d!)^{-2}\) if all \(m_i\) and \(n_i\) values are distinct. When some of the values in \(m_i\) or \(n_i\) are identical, Proposition~\ref{prop:permutations} shows that the number of permutations is reduced by the factor:
\[
\frac{k_1! k_2! \ldots k_j!}{(d!)^2},
\]
where \(j\) is the number of unique values in \([m_1, \ldots, m_d]\) and \([n_1, \ldots, n_d]\), and \(k_1, k_2, \ldots, k_j\) are their respective multiplicities. This accounts for the symmetries arising from identical values.

For example, consider \(d = 5\) and the aligned shapes \([m_1, m_2, m_3, m_4, m_5] = [5, 5, 3, 2, 2]\) and \([n_1, n_2, n_3, n_4, n_5] = [2, 2, 2, 7, 14]\). In the output shape, \(m_1 = m_2 = 5\) with multiplicity \(k_1 = 2\), and \(m_4 = m_5 = 2\) with multiplicity \(k_2 = 2\). In the input shape, \(n_1 = n_2 = n_3 = 2\) with multiplicity \(k_3 = 3\). Other dimensions, each with multiplicity 1, can effectively be ignored. Therefore, the total number of permutations in this case is:
\[
\frac{(d!)^2}{k_1! k_2! k_3!} = \frac{(5!)^2}{2! \cdot 2! \cdot 3!} = 600.
\]
Thus, choosing the aligned combination reduces the DS by a factor of \(\frac{1}{600}\).

Next, we examine how the memory for aligned combinations compares to the minimal and maximal memory values in absolute terms. Figure~\ref{fig:MinMemory} relates the memory for aligned combinations together with ranks to the minimal and maximal memory across all permutations. For example, for aligned shapes \([m_1,m_2,m_3,m_4]=[10, 10, 5, 2]\), \([n_1,n_2,n_3,n_4]=[2, 8, 8, 32]\), and \([r_0,r_1,r_2,r_3,r_4]=[1, 8, 8, 8, 1]\) the memory is \(9352\), while the maximal memory across all permutations is \(26952\), and the minimal memory across all permutations is   \(5224\). This is represented by a blue dot \((x,y)=(9352,26952)\) in the left scatterplot and a red dot \((x,y)=(9352,5224)\) in the right scatterplot of Figure~\ref{fig:MinMemory}.	The left scatterplot shows that the memory for the aligned combination is significantly smaller than the maximal memory across all permutations. However, it is only noticeably larger than the minimal Memory for relatively small memory values.

\begin{figure}[ht]
	\includegraphics[scale=0.45]{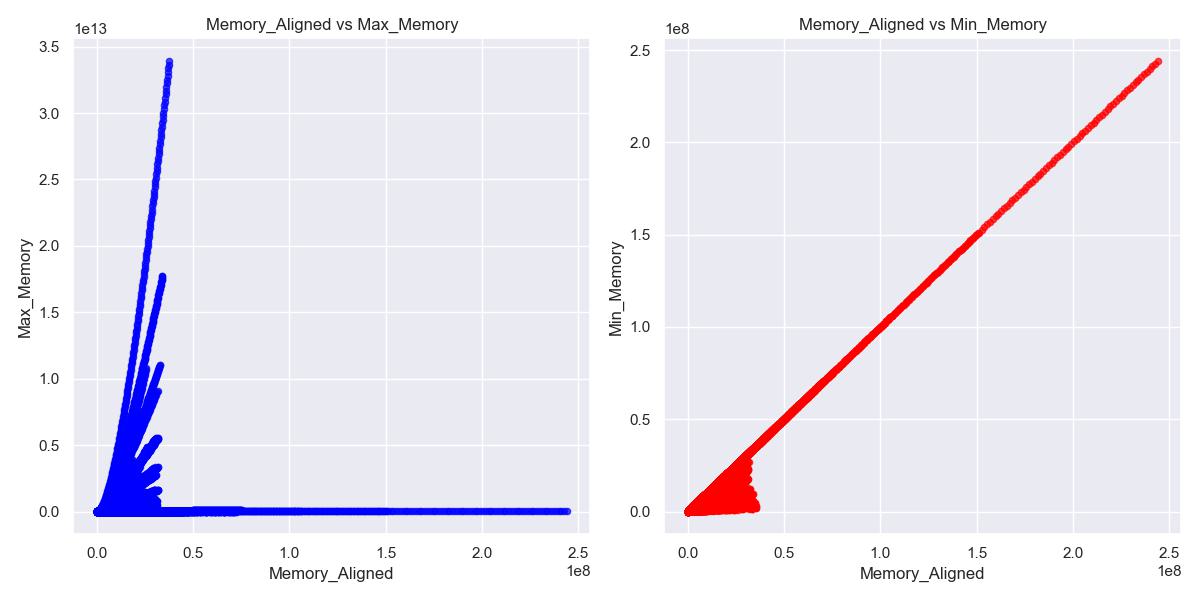}
	\caption{Scatterplots of the maximal (left) and minimal (right) memory across all permutations relative to the memory for the aligned permutation\label{fig:MinMemory}}
\end{figure}

\subsection{Prune the DS based on inference time}




In this subsection, we address the fact that not all solutions with low FLOPs or memory usage necessarily result in efficient inference times. For instance, solutions that do not scale well on multi-core CPUs or those that cannot be effectively vectorized are unlikely to achieve low inference times. To overcome theses challenges, we employ the following steps.


\subsubsection{Prune the DS based on the vectorization constraint}\label{subsec:vec_con}

The vectorization constraint serves two main purposes. First, it ensures that the target vectorized loop is sufficiently large to fully utilize the vector processing capabilities of the underlying hardware. For example, if a loop has only six iterations and the hardware's vector engine can process eight elements in parallel, the vector engine will be underutilized. Second, in cases where the number of loop iterations is not a multiple of the number of elements the engine can process in parallel, the overhead introduced by padding code can be significant. For instance, if a loop has 12 iterations and the vector engine can process eight elements in parallel, the padding overhead will be high. However, if the loop has 100 iterations, the padding overhead becomes negligible.

The selection of loops to be vectorized is discussed in Subsection 4.3. Based on this analysis, the loops selected for vectorization are those corresponding to the TTD rank values, specifically the $rt$ and $rt\_1$ loops in Listing \ref{lst:native}. Two loops, rather than one, are vectorized because the TTD method decomposes the FC layer into multiple Einsum layers, some with a varying number of loops. Consequently, different loops are vectorized by different Einsum layers.

In this paper, we restrict the rank values to be multiples of the vectorization engine's capacity (e.g., multiples of eight), discarding all other values. While high rank values that are not multiples of eight may not incur significant overhead, this work does not implement padding code to address such cases.



\subsubsection{Discard solutions with FLOPs higher than the initial FC layer}

For the remaining solutions, we calculate their FLOPs and memory requirements. Solutions that do not provide FLOPs or memory values lower than the initial, unfactorized solution are deemed inefficient and discarded. This step was not applied earlier because processing the FLOPs and memory values for all potential solutions would have been computationally prohibitive due to the large number of solutions in the initial stages. However, at this stage the number of remaining solutions is significantly lower, thus the overhead of this step is minimal.


\subsubsection{Prune the DS based on the scalability constraint}

In this subsection, the solutions that do not scale well on multi-core CPUs are discarded. This is realized in two different steps. First, for each Einsum kernel in a generated solution (specific combination shape and rank values), the optimal number of threads for efficient execution is selected, and all the other thread combinations are pruned as inefficient. Second, the solutions with high configuration length are considered to exhibit low scalability, thus are also eliminated.

Let us explain the first step in more details. 
The optimal thread count for each Einsum is determined based on its workload. To this end, an experimental analysis was conducted to identify the best thread configuration for maximizing performance on RISC-V platforms.

Figure~\ref{fig:Multy} presents a performance evaluation using varying numbers of threads on a SpacemiT K1 8-core RISC-V CPU. The x-axis represents the number of FLOPs, while the y-axis indicates the speedups to the single-thread execution case (leftmost bar). Note that only one of the two clusters was utilized, thus limiting the evaluation to four cores. As expected, when the workload per thread is sufficiently high, the code scales well. Conversely, when the workload per thread is low, thread creation, and synchronization overheads degrade performance.


Our experimental analysis revealed the following trends for FLOPs values in Einsum kernels. For the Einsum loop kernels with FLOPs value  
\begin{itemize}
    \item lower than $2 \times 10^6$, single-thread execution is optimal
    \item between $2 \times 10^6$ to $4 \times 10^6$, using two threads provide better performance
    \item between $4 \times 10^6$ and $8 \times 10^6$, three threads yield the best results
    \item larger than $8 \times 10^6$ FLOPs, four threads deliver the highest performance
\end{itemize}

Based on these findings, the number of threads for each Einsum kernel is calculated accordingly. It is worth noting that different solutions with the same number of FLOPs include different configuration lengths and as a consequence different number of layers; fewer Einsum layers typically have a higher workload per thread, allowing for better scalability and enabling the assignment of more threads. Conversely, solutions with many Einsum layers distribute the workload across more loop kernels, potentially requiring fewer threads per kernel.

\begin{figure}
    \centering
    \includegraphics[width=1\linewidth]{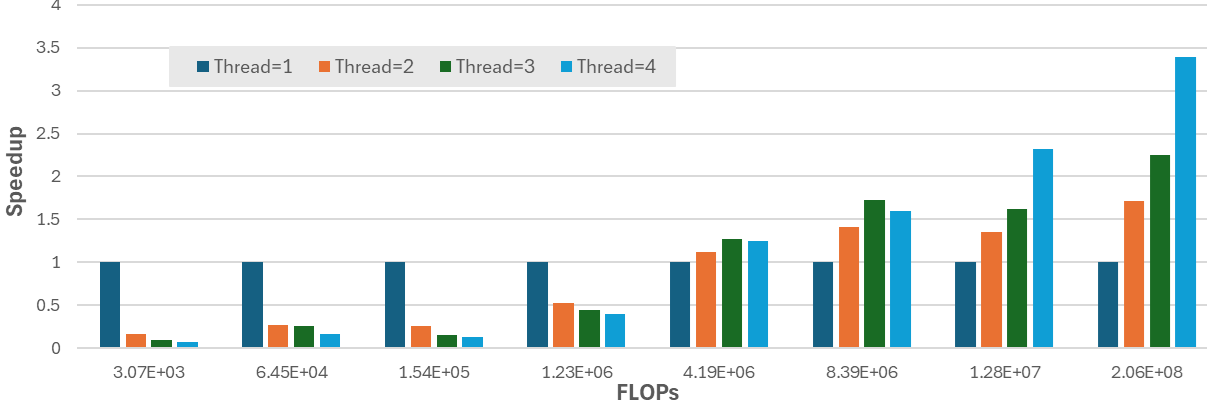}
    \caption{Speedups for different FLOPs and thread values}
    \label{fig:Multy}
\end{figure}


Therefore, the DS is further reduced by pruning high-configuration-length solutions with low scalability. In this step, solutions with combination lengths exceeding four ($d>5$) and poor scalability are discarded. 
Poor scalability is defined as solutions where the heaviest layer has a FLOPs value (\(\text{max\_FLOPs} < 8 \times 10^6\)),  indicating an imbalance in workload distribution.

The reasons that these solutions are discarded are explained hereafter. 
First, pruning these solutions ensures that the remaining exploration space remains sufficiently large to include viable options. Notably, for high-rank values, most configuration shapes naturally have shorter lengths.
Second, they generally result in the same or higher FLOPs compared to configurations with shorter lengths (as illustrated in Figure~\ref{fig:flops_d}, further explained below).


To shed light on the above observation, Figure~\ref{fig:flops_d} is presented. In this figure, we analyze the largest FC layer of AlexNet and calculate the FLOPs for all possible solutions with a fixed rank value of \(8\). As shown in Figure~\ref{fig:flops_d}, the y-axis represents the FLOPs, while the x-axis denotes different solutions across a range of combination lengths (2--12). The results demonstrate that solutions with combination lengths greater than four do not yield significant reductions in FLOPs. This trend is consistent across most models when the rank value is small. Additionally, solutions with high rank values tend to yield only a limited number of high-length combinations. It is also noteworthy that solutions with low combination lengths, and consequently fewer Einsum layers, typically have a higher workload per thread. This characteristic often leads to improved scalability.

\begin{figure}
    \centering
    \includegraphics[width=0.8\linewidth]{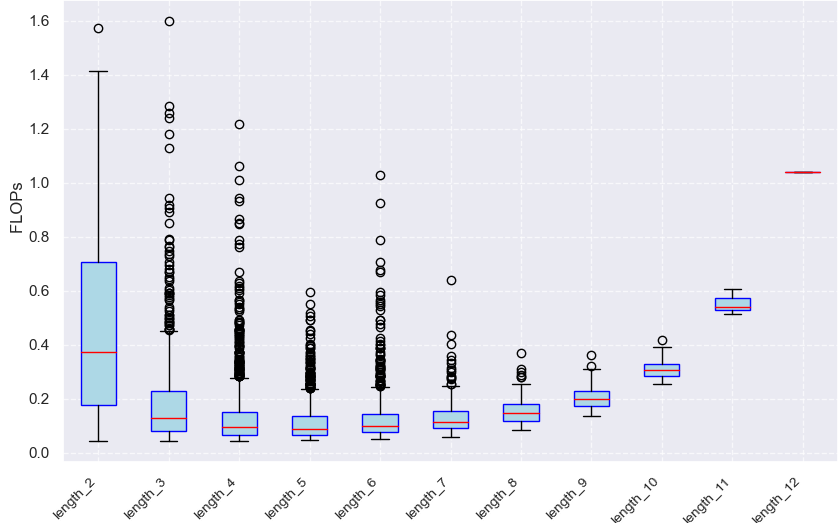}
    \caption{FLOPs for different solutions on a specific FC layer}
    \label{fig:flops_d}
\end{figure}

\subsection{Apply hardware-dependent compiler optimizations tailored to RISC-V architectures}\label{subsec:code-optimization}

In this section, we describe the compiler optimizations applied to the custom Einsum layers used by the T3F library, targeting RISC-V CPUs. The Einsum loop kernels are memory-bound, meaning their performance is primarily constrained by memory access throughput rather than computational throughput. To address this limitation, we implemented a set of compiler-level optimizations, including \textit{array packing, vectorization, register blocking, cache tiling, loop interchange, and parallelization}. These transformations were designed to mitigate memory bottlenecks and leverage the RISC-V architecture features effectively.
Our optimization process was guided by an analytical modeling approach and reasoning. The optimizations applied are listed below.

\subsubsection{Array packing optimization}\label{sub:cp}


To minimize the number of cache misses in the $G$ array reference of Listing \ref{lst:native}, array packing is applied. Array packing first, improves temporal locality and second, hardware prefetching is leveraged more effectively. This technique creates a new array at compile time with a new data layout; the new array's elements are stored in the exact order they are accessed in the Einsum layer. Specifically, the reordering follows the pattern:  

\[
G\_t\textbf{[m][rt][nt]}[rt\_1] \gets \text{reorder}(G\textbf{[rt][nt][m]}[rt\_1])
\]  


Listing \ref{lst:LA} shows the application of array packing to the Einsum code in Listing \ref{lst:native}. Note that array packing introduces a new loop kernel that initializes the new reordered array (not shown in Listing \ref{lst:LA}). This reordering can add overhead to our computation. Notably, while reordering may introduce computational overhead, this process is executed offline during compile time, ensuring that the optimization does not impose any runtime overhead. In an end-to-end application where we know the values of $G$ array reference at compile time we can change its layout. This is a well-known optimization that is used by many DNN compilers including IREE \cite{arcs}. After applying array packing, the two innermost loops in Listing \ref{lst:native} can be merged to one to simplify the code (Listing \ref{lst:LA}). 

It is important to note that the final implementation of array packing differs from the one presented in Listing \ref{lst:LA}. The layout of the $G\_t$ array must be adjusted based on the vectorization and register blocking optimization parameters, too. These adjustments are detailed in the subsequent subsections.

\begin{lstlisting}[language=C, frame=single, caption={Einsum Code after applying Array Packing}, label={lst:LA}]
for (int m = 0; m < mt; m++)
  for (int b = 0; b < bt; b++) 
    for (int r = 0; r < rt; r++)
      for (int k = 0; k < (nt*rt_1); k++){
       Output[m][b][r] += G_t[m][r][k]*Input[b][k];
      }
\end{lstlisting}

\subsubsection{Eliminate redundant reshape layers}

As illustrated in the \textit{T3F} example shown in Listing \ref{lst:python}, reshape layers are applied prior to each \texttt{Einsum} operation to ensure the $Output$ arrays have compatible shapes. By allocating these arrays as one-dimensional arrays, the reshape layers can be dynamically fused by efficiently managing array indexing.
Consider, for instance, the first reshape layer in Listing \ref{lst:python}. Here, $X$ array is a one-dimensional array of size $N$, where $N = b5 \times n5 \times r5$. Reshaping $X$ into the three-dimensional of shape $X[b5][n5][r5]$ can be accomplished without explicitly creating a new array. 



\subsubsection{Vectorization}\label{subsubsec:code-vev}

A key hardware feature of RISC-V processors is their vectorization engine (called V-extension). These processors are equipped with specialized hardware that support vector processing through intrinsics, enabling the parallel processing of up to $vl$ elements simultaneously. Efficient utilization of these intrinsics is crucial for achieving high performance.


The value of $vl$ is calculated as $vl = \text{vector\_length} / \text{data\_length}$, where \textit{vector\_length} represents the vector width of the machine (e.g., 256 bits for AVX2), and \textit{data\_length} corresponds to the size of each data element (e.g., 32 bits for single-precision floating-point numbers). On our RISC-V CPU, which has a 256-bit vector width, this calculation yields $vl = 8$.
Vectorization can be applied to one or two loops, leading to multiple vectorization options. To determine which loop(s) to vectorize, an analysis is provided for each loop in the loop kernel shown in Listing \ref{lst:LA} (assuming the data layout of the arrays is the one shown in Listing \ref{lst:LA}):  
\begin{enumerate}  
    \item \textit{Vectorizing the m-loop}: Vectorizing this loop involves loading and storing \textit{vl} elements in the \textit{m} dimension from the \textit{G\_t} and \textit{Output} arrays. This results in non-contiguous data access patterns, necessitating the use of the slow gather/scatter instructions. As a result, the data layout of \textit{Output} and \textit{G\_t} must change to improve data locality. Although this is not an issue for \textit{G\_t}, applying array packing for \textit{Output} would degrade performance as this array is not constant and this operation must be done at runtime 

    \item \textit{Vectorizing the b-loop}: Vectorizing this loop requires changing the layout of both \textit{Input} and \textit{Output} arrays. Similar to the previous case, array packing is needed for both arrays, degrading performance  

    \item \textit{Vectorizing the k-loop}: Listing \ref{lst:vec2} provides the vectorized code for this case. The main issue here is the use of horizontal addition operations (\texttt{vfredosum} in line 14), which negatively impact performance. As we can see, this approach also involves storing one element at a time (e.g., using \texttt{vfmv\_f\_s\_f32m1} in line 16) resulting in lower bandwidth utilization due to scalar stores \cite{RB2}  

    \item \textit{Vectorizing the r-loop}: Listing \ref{lst:vec1} provides the vectorized code for this case. The only issue with this approach is that the \textit{G\_t} array is not accessed contiguously. To resolve this, the layout of \textit{G\_t} is reordered from \texttt{\{m, rt, nt*rt\_1\}} to \texttt{\{m, rt/vl, nt*rt\_1, \textbf{vl}\}}, enabling contiguous data accesses. This layout change is achieved using array packing, which can be performed at compile time, as \textit{G\_t} is known before the execution of the code  
\end{enumerate}  

Based on the above analysis vectoring b and m-loops is inefficient as the overhead of changing the layout of \textit{Input} and \textit{Output} is significant. Between r and k-loops, it is more efficient to vectorize r-loop as we avoid the use of the extra and slow horizontal addition operations and we also store \textit{vl} multiple elements simultaneously. 

To sum up, the r-loop is chosen for vectorization. However, there is no r-loop in the final Einsum layer; as explained in Section 2, \( rt = 1 \), which necessitates vectorizing the k-loop instead. As mentioned, vectorizing the k-loop requires horizontal addition operations (e.g., \texttt{vfredosum} on line 14), resulting in a different microkernel compared to the earlier case. Moreover, no padding code is needed as both \texttt{rt} and \texttt{rt\_1} loops are always multiple of \textit{vl}. In Section \ref{subsec:vec_con}, we have selected the rank value to be a multiple of \textit{vl}.

\subsubsection{Register blocking and data reuse}


Register Blocking (RB), also known as 'Register-level Tiling' or 'unroll-and-jam,' is a key compiler optimization for developing efficient ukernels \cite{RB}. RB combines two compiler optimizations a) loop unroll and b) scalar replacement; by unrolling a number of loops, common array references are created in the loop body which are then replaced by variables; thus, the number of Load/Store (L/S) instructions is reduced.

\begin{lstlisting}[language=C, frame=single, caption={Einsum code after applying vectorization to k-loop}, label={lst:vec2}]
vfloat32m1_t out_0,core0,input0,z;
for(m=0; m<mt; m++){
 for(b=0; b<bt; b++){
  out_0 =vfmv_s_f_f32m1(0.0f,vl);
  for(k=0; k<(nt*rt_1); k++){
    C_in = m * nt*rt_1 + k
    in_in = b * nt*rt_1 + k
    c0 = vle32_v_f32m1(&G_t[C_in],vl);
    in0 = vle32_v_f32m1(&Input[in_in],vl);
    out0 = vfmacc_vv_f32m1(out0,c0,in0,vl);
  }
  out_i = m * bt + b;
  z = vfmv_s_f_f32m1(0.0f, vl);
  out0 = vfredosum_vs_f32m1_f32m1(out0,z,vl);
  // store 1 element back;
  Output[out_i] = vfmv_f_s_f32m1_f32(out0);
}
\end{lstlisting}

The application of RB on the Einsum kernel in Listing \ref{lst:vec1} is shown in Listing \ref{lst:vec_rb}. In Listing \ref{lst:vec_rb}, RB is applied to m and b-loops, with RB factors of 2 and 3 respectively. As it can be observed, by unrolling m and b-loops, common array references occur in the innermost loop body that are replaced by vector variables/registers. This reduces the number of L/S instructions, e.g., compared to Listing 5, the number of L/S instructions of \textit{G\_t} array is reduced by a factor of 3. The \textit{G\_t} array is accessed sequentially (Listing 6). To this end, we use array packing transformation at compile time and change the layout of \texttt{G\_t} from \texttt{\{mt, rt/vl, nt*rt, vl\}} to \texttt{\{mt, rt/(2*vl), nt*rt, \textbf{2*vl}\}}. This step follows the same logic as in the previous case.

\begin{lstlisting}[language=C, frame=single, caption={Einsum code after applying vectorization to r\_i-loop}, label={lst:vec1}, numbers=left, numberstyle=\tiny, stepnumber=1]
vfloat32m1_t out_0,core0,input0;
for(m=0; m<mt; m++){
 for(b=0; b<bt; b++){
  for(r=0; r<rt; r+=vl){
   C_index = m*rt*nt*rt_1 + r*nt*rt_1;
   out_0 = vfmv_s_f_f32m1(0.0f,vl);
   for(k=0; k<(nt*rt_1); k++){
    // G_t[]: changed layout of G_t from [mt,rt, nt*rt_1] to [mt, rt/vl, nt*rt_1, vl]
    c0 = vle32_v_f32m1(&G_t[C_index],vl);
    
    in_index = b*nt*rt_1 + k;
    in0 = vfmv_v_f_f32m1(Input[in_index],vl);
    out0 = vfmacc_vv_f32m1(out0,c0,in0,vl);
    
    C_index += vl; // sequencial access for G duo to array packing
    }
   out_i = m*bt*rt + b*rt + r;
   vse32_v_f32m1(&Output[out_i],out0,vl); // store vl elements back;
  }
 }
}
\end{lstlisting}

The primary challenges in RB involve selecting which loop or loops to unroll and determining the optimal unroll factor for each selected loop. For the general Einsum kernel where four nested loops exists, there are \( 2^4 \) possible loop unrolling combinations to evaluate as well as a big number of different unroll factor values. For instance, we may choose to unroll the m-loop alone or both the m- and k-loops simultaneously. Additionally, identifying the appropriate unroll factor for each selected loop is highly dependent on the specific CPU architecture, which also increase the exploration space.

To determine where to employ RB and their corresponding factors, we build upon and extend our prior work \cite{RB}. This process involves three straightforward and systematic steps.
First, we constrain the RB factor values based on the number of available hardware (HW) registers. This ensures that the HW registers allocated within the innermost loop body do not exceed the total number of available registers. The objective is to keep reused variables in the registers rather than spilling them into memory, thereby optimizing performance.
Second, for the remaining RB factor candidates, we estimate the number of L/S instructions required for each configuration. This step allows us to evaluate the impact of different RB factor choices on memory access patterns.
Finally, we select the configuration that minimizes the number of L/S instructions. This choice is guided by the primary goal of RB: to improve data reuse at the register level. Enhanced data reuse reduces the number of L/S instructions, leading to fewer memory accesses and improved efficiency. This rationale highlights the critical importance of minimizing L/S instructions as the objective function for RB.

When the RB factors do not perfectly divide the loops' upper bounds, padding ukernels are needed (line 42, 44 in Listing \ref{lst:vec_rb}). Padding ukernels handle the remaining iterations by processing the leftover elements after the main computation. Compared to our previous work \cite{RB}, we now account the L/S instructions occurred by the padding ukernels as well. This enhancement enables more accurate modeling of the overall computation.


The three RB above-mentioned steps are explained in detail below.
To determine the loops to apply RB and their respective factors, consider the vectorized code in Listing \ref{lst:vec1}. Initially, we assume that RB is applicable to all loops and our analysis will decide which loops to use and their blocking factors. Let $Rm$ denote the RB factor for the \texttt{m}-loop, and so forth. The three steps are further explained below:

\begin{lstlisting}[language=C, frame=single, caption={Einsum code after applying RB}, label={lst:vec_rb}]
for(m=0; m<floor(mt/2)*2; m+=2){ //Rm=2
 for(b=0; b<floor(bt/3)*3; b+=3)//Rb=3
  for(r=0;r<rt;r+=vl){
   C_index=m*rt*nt*rt_1 + r*nt*rt_1;
   out_0_0 = vfmv_s_f_f32m1(0.0f,vl); out_0_1 = out_0_0;
   out_1_0 = out_0_0;out_1_1 = out_0_0;
   out_2_0 = out_0_0;out_2_1 = out_0_0;
   for(k=0;k<(nt*rt_1);k++){
   //G_t_[]: changed layout of cores from [mt, rt/vl, nt*rt, vl] to [mt, rt/(2*vl), nt*rt, 2*vl]
    c0 = vle32_v_f32m1(&G_t[C_index],vl);
    c1 = vle32_v_f32m1(&G_t[C_index + vl],vl);
    //b
    in_index = b*nt*rt_1 + k;
    in0 = vfmv_v_f_f32m1(In[in_index],vl);
    out0_0 = vfmacc_vv_f32m1(out0,c0,in0,vl);
    out0_1 = vfmacc_vv_f32m1(out0,c1,in0,vl);
    //b+1
    in_index = (b+1)*nt*rt_1 + k;
    in0 = vfmv_v_f_f32m1(In[in_index],vl);
    out1_0 = vfmacc_vv_f32m1(out1_0,c0,in0,vl);
    out1_1 = vfmacc_vv_f32m1(out1_1,c1,in0,vl);
    //b+2
    in_index = (b+2)*nt*rt_1 + k;
    in0 = vfmv_v_f_f32m1(In[in_index],vl);
    out2_0 = vfmacc_vv_f32m1(out2_0,c0,in0,vl);
    out2_1 = vfmacc_vv_f32m1(out2_1,c1,in0,vl);

    C_index +=2*vl;
    }
   // m
   out_i = m*bt*rt + b*rt + r;
   vse32_v_f32m1(&Output[out_i],out0_0,vl);
   vse32_v_f32m1(&Output[out_i + rt],out1_0,vl);
   vse32_v_f32m1(&Output[out_i + 2*rt],out2_0,vl);
   // m + 1
   out_i = (m + 1)*bt*rt + b*rt + r;
   vse32_v_f32m1(&Output[out_i],out0_1,vl);
   vse32_v_f32m1(&Output[out_i + rt],out1_1,vl);
   vse32_v_f32m1(&Output[out_i + 2*rt],out2_1,vl);
  }
 }
 padding_ukernel_Rb(); // padding ukernel when bt in not multiple of 3
}
padding_ukernel_Rm(); // padding ukernel when mt in not multiple of 2
\end{lstlisting}

\noindent\textbf{Step1 - Constraint the RB factor values based on the number of HW registers.} As a first step, we form an equation that constrains the RB factors by relating them to the number of available HW vector registers (available.regs) (Equation \ref{eq:eq2}).

\begin{equation}\label{eq:eq2}
\text{Reg.Output} + \min(\text{Reg.In}, \text{Reg.G}) + 1 \leq \text{available.regs}
\end{equation}

where:  
i) \textit{Reg.Output} is the number of registers used for the \textit{Output} array, and $\text{Reg.Output} = Rm \times Rb \times Rr$,  
ii) \textit{Reg.In} is the number of registers used for the \textit{in} array, and $\text{Reg.In} = Rb \times Rk$, 
iii) \textit{Reg.G\_t} is the number of registers used for the \textit{G} array, and $\text{Reg.G} = Rm \times Rr \times Rk$. Depending on the generated RB factors, only one register is allocated for either the in array or the G array. This explains the presence of the min function in Equation~\eqref{eq:eq2} ; this is further explained in~\cite{RB}.
Thus, Equation~\eqref{eq:eq2} becomes:

\begin{equation}\label{eq:reg}
Rm \times Rb \times Rr + \min(Rb \times Rk, Rm \times Rr) + 1 \leq \text{available.regs}
\end{equation}

For example, in Listing \ref{lst:vec_rb}, unrolling $Rm$ by a factor of 2 and $Rb$ by a factor of 3 results in the allocation of six HW registers for $Output$ (lines 6–8) and two HW registers for $G$ (lines 11–12). In the code, rather than allocating $Rb=3$ registers for $In$, we allocate a single register and overwrite it $Rb$ times. This approach enables efficient utilization of the available hardware registers.

\noindent\textbf{Step2 - Generate equation for L/S instructions.}  
In this step, we calculate the number of L/S instructions required for the \textit{Output}, \textit{G\_t}, and \textit{Input} arrays. These calculations depend on the blocking factors and the structure of the computation. The total number of L/S instructions is expressed as the summation of the L/S instructions for each array reference:  

\begin{equation}\label{eq:ls}
   L/S\_instrs = L/S^{(Output)} + L/S^{(Input)} + L/S^{(G\_t)}
   \end{equation}


The third term in Equation~\eqref{eq:ls} is shown below:

   \begin{equation}\label{eq:eq4}
   L/S^{(G\_t)} = \frac{mt \cdot \lfloor bt/Rb \rfloor \cdot rt \cdot nt \cdot rt\_1}{vl} \\
   + L/S^{(padding\_ukernel())}
   \end{equation}

Regarding Equation~\eqref{eq:eq4}, the first term approximates the number of L/S instructions when the loops' upper bounds are perfectly divided by the RB factors, while the second term approximates the number of L/S instructions when the loops' upper bounds are not perfectly divided by the blocking factors, e.g., using the padding ukernel in line 42 at listing \ref{lst:vec_rb}. The first term is given by the product of the loops' upper bounds. 

In Equation~\eqref{eq:pad1}, we show the $L/S^{(padding\_ukernel())}$ when $bt$ is not perfectly divided by $Rb$.

    \begin{equation}\label{eq:pad1}
   L/S^{(padding\_ukernel\_bt())} = \frac{mt \cdot rt \cdot nt \cdot rt\_1}{vl} \cdot \delta(bt \mod Rb)
   \end{equation}
   
    Where the Kronecker delta function is defined as:
    
    \begin{equation}\label{eq:del}
    \delta(x) = 
        \begin{cases} 
        0 & \text{if } x = 0, \\
        1 & \text{if } x \neq 0.
        \end{cases}
    \end{equation}


The second term in Equation~\eqref{eq:ls} ($L/S^{(Input)}$) is given by Equation~\eqref{eq:eq5}.

\begin{equation}\label{eq:eq5}
   \begin{split}
   L/S^{(Input)} = &\frac{\lfloor mt/Rm \rfloor \cdot bt \cdot \lfloor rt/Rr \rfloor  \cdot nt \cdot rt\_1}{vl} +  L/S^{(padding\_ukernel)}
   \end{split}
   \end{equation}
   
The calculation for the \textit{Input} array is similar to the \textit{G\_t} array. The first term in Equation~\eqref{eq:eq5} approximates the number of L/S instructions when the RB factors perfectly divide the loops' upper bounds, while the second term approximates the number of L/S instructions when the RB factors do not perfectly divide the loops' upper bounds. The second term in Equation~\eqref{eq:eq5} is calculated similarly.

The first term of Equation~\eqref{eq:ls} is shown below:

   \begin{equation}\label{eq:eq3}
   \begin{split}
   L/S^{(Output)} = \frac{mt \cdot \lfloor bt/Rb \rfloor  \cdot \lfloor rt/Rr \rfloor}{vl} +  L/S^{(padding\_ukernel)}
   \end{split}
   \end{equation}
The calculation for the \textit{Output} array is similar to the \textit{G\_t} array. As we can see in line 16 of listing \ref{lst:vec1}, we store vl elements simultaneously in each iteration. Since we have $mt*bt*rt$ iterations, the above equation is derived. 
   




   

The equations for RB, when the \( k \)-loop is vectorized (Listing \ref{lst:vec2}), remain similar and are applied in the same manner. In this case, Equations~\eqref{eq:reg}, \eqref{eq:eq4} and \eqref{eq:eq5} remain unchanged. However, the number of stores for the \textit{Output} array need to be amended.

\noindent\textbf{Step3 - Select the solution that minimizes the number of L/S instructions.}  

In the final step, the RB parameters that minimize the number of L/S instructions are found using enumeration. For example, given 16 HW registers and $\{mt, bt,  rt, nt* rt\_1\}$ = $\{128, 32, 8, 8\}$, the calculated RB factors are $\{Rm, Rb,  Rr, Rk\} = \{4, 3,  1, 1\}$. 


\subsubsection{Loop tiling, loop parallelization, and loop interchange.}


In this Subsection, we demonstrate the application of three key optimizations, loop tiling, loop parallelization, and loop interchange. 
These optimizations are strongly interdependent, and thus the order in which they are applied significantly affects the resulting schedule/binary.
Let us examine the general code structure following the application of vectorization (Listing \ref{lst:vec1}). Regarding loop permutation, there are 4! possible permutations, as there are four loops in total. Although we have not studied all 4! different permutations, we selected the following two permutations, because they achieve efficient data access patterns and data parallelism: i) $\{mt, bt,  rt, nt*rt\_1\}$ and ii) $\{bt, mt,  rt, nt*rt\_1\}$. In Listing 
\ref{lst:vec1}, only the three outermost loops are data parallel; therefore, we set these loops as the outermost in both permutations.

Next, loop tiling and loop parallelization are applied to both loop permutations. Loop tiling is a critical optimization for memory-bound loop kernels (as in our case), because it reduces the number of data accesses to the slower main memory. Since loop tiling and loop parallelization are strongly interdependent, these two optimizations are applied together. Loop tiling is specifically tailored for the L2 cache (the last-level cache in our target platform), minimizing the number of data accesses in main memory.
The following steps outline the methodology used for loop tiling:

\textbf{Step 1:} We begin by selecting the loop permutation $\{mt, bt, rt, nt*rt\_1\}$ and checking if the data access patterns fit in L2 cache. If the data access patterns fit in L2 (or equivalently if Equation~\eqref{eq:tile1} is satisfied), then loop tiling is not necessary. In this case, `mt` loop is parallelized. 

    \begin{equation}\label{eq:tile1}
    \textit{T} \times \lceil (bt \cdot rt \cdot 4) / L2.way \rceil + 
    \textit{T} \times \lceil (rt \cdot nt \cdot rt\_1 \cdot 4) / L2.way \rceil + 
    \lceil (bt \cdot nt \cdot rt\_1 \cdot 4) / L2.way \rceil \leq L2.assoc
    \end{equation}
    
Here, \textit{T} represents the number of CPU threads (T equals to the number of physical CPU cores), while \textit{L2.assoc} and \textit{L2.way} denote the associativity and the number of ways of the L2 cache, respectively. The first/second/third term in Equation~\eqref{eq:tile1} relates to the number of L2 cache ways allocated for the Output / G\_t\ / Input array, respectively.

It is important to note that not all arrays need to fit entirely in L2 to utilize data reuse. For the \textit{Output} array, $bt*rt*4$ bytes per thread are sufficient (L2 is shared among all four cores). Similarly, for the \textit{Input} and \textit{G\_t} arrays, the required sizes are $bt*nt*rt\_1*4$ and $rt*nt*rt\_1*4$ bytes, respectively. 

Since each reused part should fit in L2, its size and the sizes of the other arrays’ data it accompanies should be constrained. To ensure the reused parts remain in the cache, they should first contain consecutive memory locations (which they do), and second, an empty cache line should be allocated for each different modulo (with respect to the cache size) of the arrays’ memory addresses. In simple terms, each of the three arrays should be assigned to separate cache ways.
In Equation~\eqref{eq:tile1}, not one, but T cache ways are used for \textit{Output}/\textit{G\_t} arrays because T parts of size bt*rt*4/rt*nt*rt\_1*4 will be loaded into the cache, and these parts are not stored into consecutive memory locations; to make sure that these parts remain in the cache, they are treated as separate tiles. This is why the first two terms in Equation~\eqref{eq:tile1} are multiplied by T. 
If Equation~\eqref{eq:tile1} is not satisfied, we proceed to the next step. 

\textbf{Step 2:} Select the loop permutation $\{bt, mt,  rt, nt*rt\_1\}$ and check if the data access patterns fit in L2. If the data access patterns fit in L2 (thus Equation~\eqref{eq:tile2} is satisfied), tiling is not applied. `bt` loop is parallelized in this case. If Equation~\eqref{eq:tile2} is not satisfied, we move to the next step. 

    \begin{equation}\label{eq:tile2}
    1 + \lceil (mt \cdot rt \cdot nt*rt\_1 \cdot 4) / L2.way \rceil + 
    T \times \lceil (nt*rt\_1 \cdot 4) / L2.way \rceil \leq L2.assoc
    \end{equation}
    
\textbf{Step 3:} Select the loop permutation $\{mt, bt,  rt, nt*rt\_1\}$, parallelize the `mt` loop and apply loop tiling to `bt` loop with tile size $Btl$ such that:

    \begin{equation}\label{eq:tile3}
    T \times \lceil (Btl \cdot rt \cdot 4) / L2.way \rceil + 
    T \times \lceil (rt \cdot nt*rt\_1 \cdot 4) / L2.way \rceil + 
    \lceil (Btl \cdot nt*rt\_1 \cdot 4) / L2.way \rceil \leq L2.assoc
    \end{equation}

If Equation~\eqref{eq:tile3} is not satisfied, the solution is deemed inefficient and discarded, and the next candidate from the remaining DS is evaluated. However, in this work, we did not encounter any such cases. The above method satisfies that the number of off-chip data accesses is minimized.

\section{Experimental Setup}\label{sec:experimental-setup}


To demonstrate the effectiveness of our approach, we evaluated it using various CNNs and LLMs models, i.e., i) CNNs: LeNet5, LeNet300, AlexNet, VGG, GoogleNet, Xception, and ResNet on different datasets, i.e., MNIST, CIFAR10, CIFAR100, and ImageNet since the type of dataset affects also the size of FC layers, and ii) LLMs: GPT2-Medium, GPT2-Large, GPT2-ExtraLarge, GPT3-Ada, GPT3-Curie, and GPT3-Davinci on the WebText dataset with embedding dimension equals to 50257.

The proposed methodology was evaluated on the SpacemiT K1, an 8-core 64-bit RISC-V chip running at 1.6GHz. Each core features a 32KB L1 cache, with a shared 1MB L2 cache for all eight cores. The experiments were conducted using the cluster 0 in Banana Pi BPI-F3 which includes four CPU cores, running the Bianbu-23.10 operating system. 


The proposed work explained in Section 4.3, is evaluated using the IREE compiler \cite{iree}, an MLIR-based framework widely regarded as one of the best in its class \cite{arcs}. The experiments were conducted using version $candidate-20240515.894$ of IREE using cross compile. The "embedding compilation" was used, an Ahead-of-Time (AOT) compilation employed to execute MLIR modules on the HAL device while minimizing runtime overhead. The compilation steps and parameters for this procedure are provided in the Appendix. The procedure takes as input an MLIR module that describes the custom Einsum kernel, based on the HLO dialect (also included in the Appendix).

Our approach is also compared against Pluto \cite{pluto}. All the applicable polyhedral optimizations provided by Pluto were enabled and the L2-cache size was specified using the appropriate flag. 
To ensure vectorization, GCC was configured with the -O3 optimization level along with the $-march=rv64gcv\_zvl256b$ flag, which identifies the presence of a 256-bit vector engine.



\section{Evaluation Results}\label{sec:results}


\begin{figure}
    \centering
    \includegraphics[width=1\linewidth]{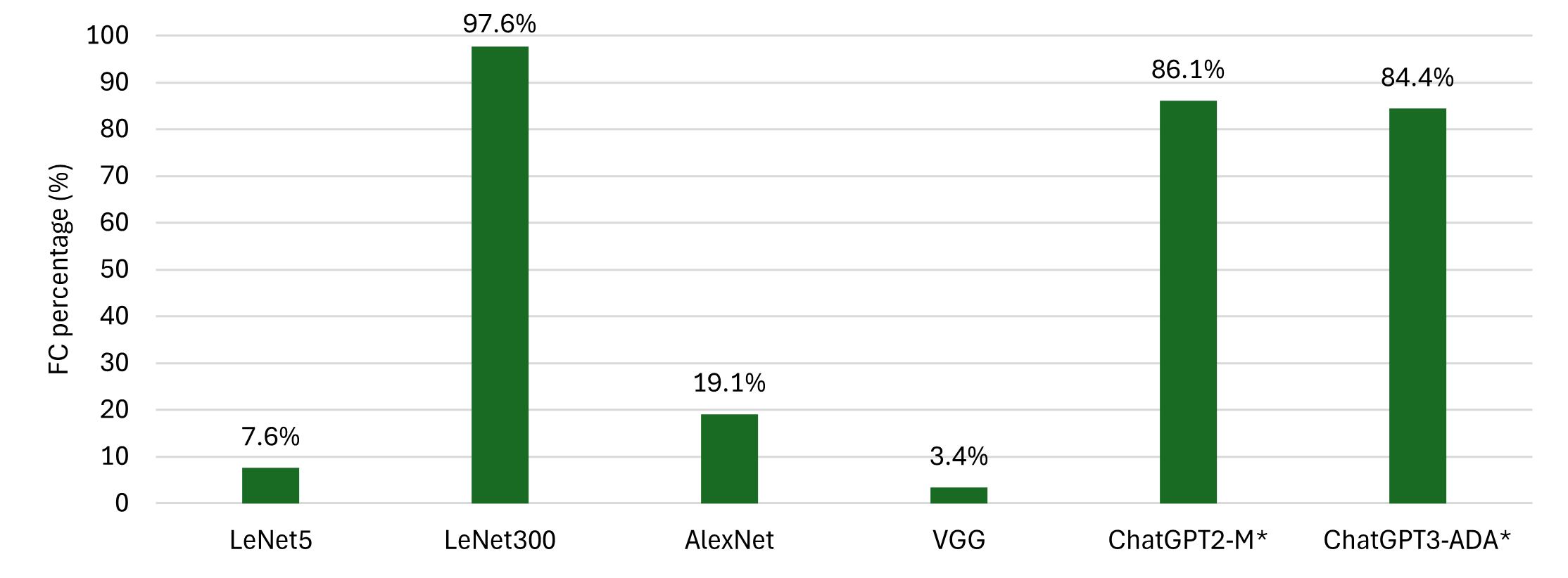}
    \caption{Percentage of total execution time spent in FC layers}
    \label{fig:fc-percentage}
\end{figure}

This section is divided into five parts. Subsection 6.1 shows the impact of the FC layers on inference time. Subsection 6.2 evaluates the methods used to reduce the DS (as described in Sections 4.1 and 4.2). Subsection 6.3 assesses the compiler optimization methods outlined in Section 4.3. Subsection 6.4 presents an end-to-end model evaluation compared to the case where LRF is not employed. Last, Subsection 6.5 provides a performance breakdown of the proposed compiler optimizations.

\subsection{FC layers Inference time analysis}
As showed in Figure~\ref{fig:params_flops}, FC layers can account for a significant portion of the total memory footprint in many architectures. To complement this analysis, we measured the execution time proportion of FC layers using the TensorFlow Lite benchmarking tool~\cite{tflite-benchmarking}, which provides layer-by-layer latency profiling on the target device. The results are presented in Figure~\ref{fig:fc-percentage}. We do not report results for all the studied models, as some employ only a single FC layer, making direct comparison unfair. Moreover, running very large LLMs on the target device was not feasible due to memory constraints (TensorFlow currently cannot convert them to TFLite due to the 2 GB flatbuffer limit).

The observed trends are closely related to the architectural composition of each model. For example, in LeNet300 model, the network consists almost entirely of FC layers and activation functions; consequently, the FC layers dominate the execution time, reaching 97.6\% of the total. Similarly, in LLM-family models, the core transformer blocks are primarily composed of FC layers (in the form of projection and feed-forward sub-layers), along with normalization and residual connections. In such cases, FC layers also represent a substantial fraction of the execution time (up to 86.1\% of the total).

In contrast, for the models that have many convolution layers, the impact of the FC part on inference time is lower. Convolutional layers are generally more computationally intensive (time-oriented), whereas FC layers tend to be more memory-oriented in these models, with high parameter counts but lower computational depth. As a result, the FC execution time proportion in CNNs is often smaller but can still be non-negligible.

Overall, Figure~\ref{fig:fc-percentage} highlights that FC layers can contribute significantly to execution time in a variety of architectures. While their impact varies depending on the model structure, they remain an important consideration for both memory and latency optimization.


\begin{table*}[]
\caption{Design space reduction for the studied CNN models\label{tab:CNNmodels}}
\centering
\resizebox{\textwidth}{!}{ 

\begin{tabular}{cccccccc}
\hline
\multirow{2}{*}{Model} & \multirow{2}{*}{Dataset} & \multirow{2}{*}{FC shapes} & \multicolumn{5}{c}{Number of solutions} \\ \cline{4-8} 
 &  &  & \begin{tabular}[c]{@{}c@{}}All initial\\ solutions\end{tabular} & \begin{tabular}[c]{@{}c@{}}Alignment\\ strategy\end{tabular} & \begin{tabular}[c]{@{}c@{}}Vectorization\\ constraint\end{tabular} & \begin{tabular}[c]{@{}c@{}}Initial layer\\ constraint\end{tabular} & \begin{tabular}[c]{@{}c@{}}Scalability\\ constraint\end{tabular} \\ \hline
\multirow{2}{*}{LeNet5} & \multirow{2}{*}{MNIST} & {[}400, 120{]} & 9.5E+08 & 1.2E+07 & 1.0E+03 & 2.2E+02 & 2.2E+02 \\
 &  & {[}120, 84{]} & 5.4E+06 & 1.1E+05 & 3.3E+02 & 5.6E+01 & 5.6E+01 \\ \hline
\multirow{2}{*}{LeNet300} & \multirow{2}{*}{MNIST} & {[}784 300{]} & 1.2E+10 & 6.8E+07 & 2.4E+03 & 5.7E+02 & 5.6E+02 \\
 &  & {[}300 100{]} & 1.1E+07 & 2.1E+05 & 4.5E+02 & 8.9E+01 & 8.9E+01 \\ \hline
\multirow{8}{*}{AlexNet} & \multirow{2}{*}{CIFAR10} & {[}4096, 2048{]} & 5.4E+20 & 5.4E+19 & 9.1E+03 & 4.1E+03 & 3.1E+03 \\
 &  & {[}2048, 2048{]} & 1.3E+19 & 1.9E+18 & 6.2E+03 & 2.6E+03 & 2.0E+03 \\ \cline{2-8} 
 & \multirow{3}{*}{CIFAR100} & {[}4096, 2048{]} & 5.4E+20 & 5.4E+19 & 9.1E+03 & 4.1E+03 & 3.1E+03 \\
 &  & {[}2048, 2048{]} & 1.3E+19 & 1.9E+18 & 6.2E+03 & 2.6E+03 & 2.0E+03 \\
 &  & {[}2048, 100{]} & 1.4E+08 & 2.5E+06 & 6.0E+02 & 1.1E+02 & 1.1E+02 \\ \cline{2-8} 
 & \multirow{3}{*}{ImageNet} & {[}9216, 4096{]} & 2.5E+25 & 3.5E+23 & 7.7E+04 & 3.9E+04 & 2.8E+04 \\
 &  & {[}4096, 4096{]} & 4.1E+22 & 6.6E+21 & 1.5E+04 & 6.5E+03 & 4.9E+03 \\
 &  & {[}4096, 1000{]} & 2.3E+14 & 6.5E+11 & 7.1E+03 & 2.2E+03 & 2.1E+03 \\ \hline
\multirow{8}{*}{VGG} & \multirow{2}{*}{CIFAR10} & {[}512, 512{]} & 1.1E+13 & 1.8E+12 & 1.1E+03 & 3.8E+02 & 3.2E+02 \\
 &  & {[}512, 256{]} & 4.2E+11 & 5.0E+10 & 6.3E+02 & 2.1E+02 & 1.3E+02 \\ \cline{2-8} 
 & \multirow{3}{*}{CIFAR100} & {[}512, 512{]} & 1.1E+13 & 1.8E+12 & 1.1E+03 & 3.8E+02 & 3.2E+02 \\
 &  & {[}512, 256{]} & 4.2E+11 & 5.0E+10 & 6.3E+02 & 2.1E+02 & 1.9E+02 \\
 &  & {[}256, 100{]} & 4.9E+06 & 1.7E+05 & 2.1E+02 & 4.1E+01 & 4.1E+01 \\ \cline{2-8} 
 & \multirow{3}{*}{ImageNet} & {[}25088, 4096{]} & 1.5E+26 & 2.8E+24 & 8.6E+04 & 3.6E+04 & 2.7E+04 \\
 &  & {[}4096, 4096{]} & 4.1E+22 & 6.6E+21 & 1.5E+04 & 6.5E+03 & 4.9E+03 \\
 &  & {[}4096, 1000{]} & 2.3E+14 & 6.5E+11 & 7.1E+03 & 2.2E+03 & 2.1E+03 \\ \hline
ResNet & ImageNet & {[}2048, 1000{]} & 3.6E+13 & 1.5E+11 & 4.6E+03 & 1.5E+03 & 1.5E+03 \\ \hline
GoogleNet & ImageNet & {[}1024, 1000{]} & 5.3E+12 & 3.5E+10 & 3.3E+03 & 1.0E+03 & 9.8E+02 \\ \hline
Xception & ImageNet & {[}2048, 1000{]} & 3.6E+13 & 1.5E+11 & 4.6E+03 & 1.5E+03 & 1.5E+03 \\ \hline
\end{tabular}
}
\end{table*}

\subsection{Assessing the effectiveness of DS reduction}

\begin{table*}[]
\caption{Design space reduction for the studied LLM models. The same dataset, WebText, is used for all models\label{tab:LLMmodels}}
\centering
\resizebox{\textwidth}{!}{ 

\begin{tabular}{ccccccc}
\hline
\multirow{2}{*}{Model} & \multirow{2}{*}{FC shapes} & \multicolumn{5}{c}{Number of solutions} \\ \cline{3-7} 
 &  & \begin{tabular}[c]{@{}c@{}}All initial\\ solutions\end{tabular} & \begin{tabular}[c]{@{}c@{}}Alignment\\ strategy\end{tabular} & \begin{tabular}[c]{@{}c@{}}Vectorization\\ constraint\end{tabular} & \begin{tabular}[c]{@{}c@{}}Initial layer\\ constraint\end{tabular} & \begin{tabular}[c]{@{}c@{}}Scalability\\ constraint\end{tabular} \\ \hline
\multirow{4}{*}{GPT2-Medium} & 24*4*({[}1024, 1024{]}) & 9.0E+15 & 1.6E+15 & 2.8E+03 & 1.0E+03 & 8.5E+02 \\
 & 24*({[}1024, 4096{]}) & 8.2E+18 & 5.6E+17 & 6.1E+03 & 2.4E+03 & 1.9E+03 \\
 & 24*({[}4096, 1024{]}) & 8.2E+18 & 5.6E+17 & 6.1E+03 & 2.4E+03 & 1.9E+03 \\
 & {[}1024, 50257{]} & 3.6E+04 & 1.7E+04 & 2.1E+03 & 1.0E+02 & 1.0E+02 \\ \hline
\multirow{4}{*}{GPT2-Large} & 36*4*({[}1280, 1280{]}) & 5.3E+16 & 2.5E+15 & 9.7E+03 & 3.6E+03 & 3.0E+03 \\
 & 36*({[}1280, 5120{]}) & 4.6E+19 & 9.5E+17 & 2.3E+04 & 9.2E+03 & 7.5E+03 \\
 & 36*({[}5120, 1280{]}) & 4.6E+19 & 9.5E+17 & 2.3E+04 & 9.2E+03 & 7.5E+03 \\
 & {[}1280, 50257{]} & 6.9E+04 & 3.2E+04 & 3.9E+03 & 1.7E+02 & 1.7E+02 \\ \hline
\multirow{4}{*}{GPT2-ExtraLarge} & 48*4*({[}1600, 1600{]}) & 3.1E+16 & 4.1E+14 & 1.8E+04 & 6.1E+03 & 5.4E+03 \\
 & 48*({[}1600, 6400{]}) & 2.1E+19 & 1.1E+17 & 4.7E+04 & 1.7E+04 & 1.4E+04 \\
 & 48*({[}6400, 1600{]}) & 2.1E+19 & 1.1E+17 & 4.7E+04 & 1.7E+04 & 1.4E+04 \\
 & {[}1600, 50257{]} & 8.5E+04 & 3.9E+04 & 4.9E+03 & 2.2E+02 & 2.2E+02 \\ \hline
\multirow{4}{*}{GPT3-Ada} & 12*4*({[}768, 768{]}) & 3.7E+15 & 5.9E+13 & 6.7E+03 & 2.7E+03 & 2.3E+03 \\
 & 12*({[}768, 3072{]}) & 2.4E+18 & 2.1E+16 & 1.6E+04 & 7.0E+03 & 5.6E+03 \\
 & 12*({[}3072, 768{]}) & 2.4E+18 & 2.1E+16 & 1.6E+04 & 7.0E+03 & 5.6E+03 \\
 & {[}768, 50257{]} & 5.4E+04 & 2.5E+04 & 3.1E+03 & 1.3E+02 & 1.3E+02 \\ \hline
\multirow{4}{*}{GPT3-Curie} & 24*4*({[}2048, 2048{]}) & 1.3E+19 & 1.9E+18 & 6.2E+03 & 2.6E+03 & 2.0E+03 \\
 & 24*({[}2048, 8192{]}) & 2.4E+22 & 2.1E+21 & 1.4E+04 & 5.9E+03 & 4.4E+03 \\
 & 24*({[}8192, 2048{]}) & 2.4E+22 & 2.1E+21 & 1.4E+04 & 5.9E+03 & 4.4E+03 \\
 & {[}2048, 50257{]} & 5.0E+04 & 2.3E+04 & 2.9E+03 & 1.8E+02 & 1.8E+02 \\ \hline
\multirow{4}{*}{GPT3-Davinci} & 96*4*({[}12288, 12288{]}) & 4.9E+29 & 5.3E+27 & 2.5E+05 & 1.3E+05 & 1.3E+05 \\
 & 96*({[}12288, 49152{]}) & 4.9E+33 & 2.9E+31 & 5.1E+05 & 2.6E+05 & 2.6E+05 \\
 & 96*({[}49152, 12288{]}) & 4.9E+33 & 2.9E+31 & 5.1E+05 & 2.6E+05 & 2.6E+05 \\
 & {[}12288, 50257{]} & 2.7E+05 & 1.3E+05 & 1.2E+01 & 1.2E+01 & 1.2E+01 \\ \hline
\end{tabular}
}
\end{table*}

The effectiveness of the DS reduction is shown in Table 1 for 27 FC layers of CNNs and in Table 2 for 24 FC layers of LLMs. The extremely small layers are not factorized as there is no point in applying LRF in these layers (these layers are not shown in Table 1 and Table 2). 

As it can be observed, the DS can become exceptionally large for certain layers, reaching up to $4.9 \times 10^{29}$ possible solutions. The size of the DS is determined by both the overall dimensions of the layer (larger arrays result in a larger DS) and the specific values of those dimensions (a higher number of factor combinations that match the dimension values leads to an increased number of potential solutions). For instance, the number 512 has more factor combinations that can be be multiplied to produce it compared to 1000.
This explains why a layer with dimensions $1024 \times 1000$ results in a smaller DS compared to a layer with dimensions $512 \times 512$ (Table 1).

The input-output shape alignment strategy, introduced in Section 4.1, reduces the DS based on the configuration length d (Equation~\eqref{eq:tf_Nov2}). A higher configuration length, which corresponds to a higher number of Einsum layers, results in a more significant reduction in the DS. However, the number of solutions with high d values is inherently smaller compared to those with low d values (this is because when the length increases the factors should be smaller). Overall, the alignment strategy achieves reductions in the DS ranging from $x2.1$ to $x92$ (GTP3-Davinci).

The vectorization constraint further reduces the DS by several orders of magnitude by retaining only those solutions with rank values that are multiples of eight. In contrast, the initial layer constraint has a minimal effect on pruning, as it excludes only those solutions with higher FLOPs than the original unfactorized layer. Finally, the scalability constraint, as shown in Tables 1 and 2, evaluates only the second scalability constraint proposed in Section 4.3. This constraint has a modest impact on the DS, since it prunes only the solutions with exceptionally small layers and configuration lengths exceeding four.



\subsection{Assessing the effectiveness of compiler optimizations}

In this subsection, we evaluate the compiler optimization approach outlined in Section 4.3, focusing on the custom Einsum layers used by the T3F library on RISC-V CPUs. Section 4.3 proposed several optimizations aimed at improving the performance of these layers. 
It is important to highlight that the code for the Einsum layers varies slightly depending on the configuration length of the solution, resulting in three distinct Einsum versions. 
The \textit{First Einsum} is characterized by \( \text{rt\_1} = 1 \), and it is utilized at the beginning of the T3F sequence. The \textit{Middle Einsum} features \( \text{rt} = \text{rt\_1} \neq 1 \), while the \textit{Final Einsum} has \( \text{rt} = 1 \) (Listing \ref{lst:native}).

A comprehensive experimental evaluation was conducted for all three Einsum versions using the SpacemiT K1 platform. The “CB0-CB7” IDs in Table \ref{tab:setup} represent eight distinct configuration shapes of Einsum kernels, each selected from the studied AI models. A rank value of eight was chosen, as optimization is more challenging for smaller rank values due to their low scalability.


\begin{table}[h!]
\caption{Selected sizes used for the three Einsum kernels}
\centering
\resizebox{\textwidth}{!}{%
\begin{tabular}{c|ccccc|ccccc|ccccc}
\toprule
\textbf{ID} & \multicolumn{5}{c|}{\textbf{First Einsum}} & \multicolumn{5}{c|}{\textbf{Middle Einsum}} & \multicolumn{5}{c}{\textbf{Final Einsum}} \\
\midrule
 & \textbf{mt} & \textbf{bt} & \textbf{nt} & \textbf{rt} & \textbf{FLOPs} 
 & \textbf{mt} & \textbf{bt} & \textbf{nt} & \textbf{rt, rt\_1} & \textbf{FLOPs} 
 & \textbf{mt} & \textbf{bt} & \textbf{nt} & \textbf{rt\_1} & \textbf{FLOPs} \\
\midrule
CB0 & 512 & 32  & 128 & 8   & 3.36E+07 & 48  & 224   & 2   & 8   & 2.75E+06 & 32  & 126  & 256 & 8   & 1.65E+07 \\
CB1 & 64  & 64  & 64  &     & 4.19E+06 & 64  & 3582  & 4   &     & 1.17E+08 & 64  & 64   & 128 &     & 8.39E+06 \\
CB2 & 128 & 1024 & 4   &     & 8.39E+06 & 96  & 128   & 14  &     & 2.20E+07 & 32  & 126  & 4   &     & 2.58E+05 \\
CB3 & 256 & 64   & 784 &     & 2.06E+08 & 64  & 64    & 32  &     & 1.68E+07 & 256 & 16   & 7   &     & 4.59E+05 \\
CB4 & 32  & 64   & 392 &     & 1.28E+07 & 256 & 128   & 4   &     & 1.68E+07 & 8   & 510  & 896 &     & 5.85E+07 \\
CB5 & 512 & 896  & 28  &     & 2.06E+08 & 32  & 9     & 7   &     & 2.58E+05 & 32  & 250  & 4   &     & 5.12E+05 \\
CB6 & 100 & 12   & 64  &     & 1.23E+06 & 4   & 16383 & 28  &     & 2.35E+08 & 124 & 9    & 16  &     & 2.86E+05 \\
CB7 & 16  & 4    & 150 &     & 1.54E+05 & 64  & 1020  & 28  &     & 2.34E+08 & 48  & 21   & 4   &     & 6.45E+04 \\
\bottomrule
\end{tabular}%
}
\label{tab:setup}
\end{table}

As noted, our approach was compared against two state-of-the-art tools: Pluto and IREE. 
Pluto is a source-to-source transformation tool that applies loop tiling, parallelization, loop interchange, and RB. Pluto does not directly support vectorization and relies on the compiler for this functionality.

IREE converts custom Einsum layers into Matrix-Matrix Multiplication (MMM) loop kernels, optimizing them through vectorization, parallelization, loop tiling, and RB. The transformation of the Einsum layer to MMM kernel is made during the "\textit{iree-stablehlo-to-stablehlo-preprocessing}" pass, as detailed in the Appendix. This pass will add three transpose layers for each array reference (Input, Output and $G$). It is important to note that the "\textit{iree-consteval-jit-globals}" pass eliminates the need for transposing the $G$ array. As mentioned in Section \ref{sub:cp}, $G$ is constant and can be stored in the desired memory order. Furthermore, IREE accompanies the MMM kernel with a packing and unpacking kernel, responsible to rearrange the storage of input and output arrays in memory to enable sequential accesses. This packing transformation adds an overhead and is necessary for accessing the kernel in consequentially memory locations in the MMM kernel. Finally, since some Einsum layers are relatively small and do not scale efficiently, both Pluto and IREE were evaluated using both 1 and 4 threads. The fastest execution time for each tool was selected for a fair comparison.

Figures~\ref{fig:first}, \ref{fig:midle}, and \ref{fig:final_einsum} present the GFLOP/s achieved for the first, middle, and final Einsum kernels, respectively. The x-axis in these figures corresponds to the eight different input sizes listed in Table \ref{tab:setup}. 



\textit{First Einsum Kernel:}
In Figure~\ref{fig:first}, the proposed method demonstrates superior performance compared to both Pluto and IREE. Our method averages 5.66 GFLOP/s, compared to IREE's 2.35 GFLOP/s and Pluto's 0.77 GFLOP/s. The proposed method is faster as we tackle more efficiently the aforementioned optimizations. The results indicate that Pluto is the slowest of the three approaches due to its failure to vectorize the kernel. Despite enabling relevant flags, Pluto depends on gcc to apply vectorization, which in this case was not effectively applied. Additionally, IREE's transformation of the Einsum layer into a MMM kernel using transpose layers, introduces more instructions providing less HW utilization compared to our method.

\begin{figure}
    \centering
    \includegraphics[width=1\linewidth]{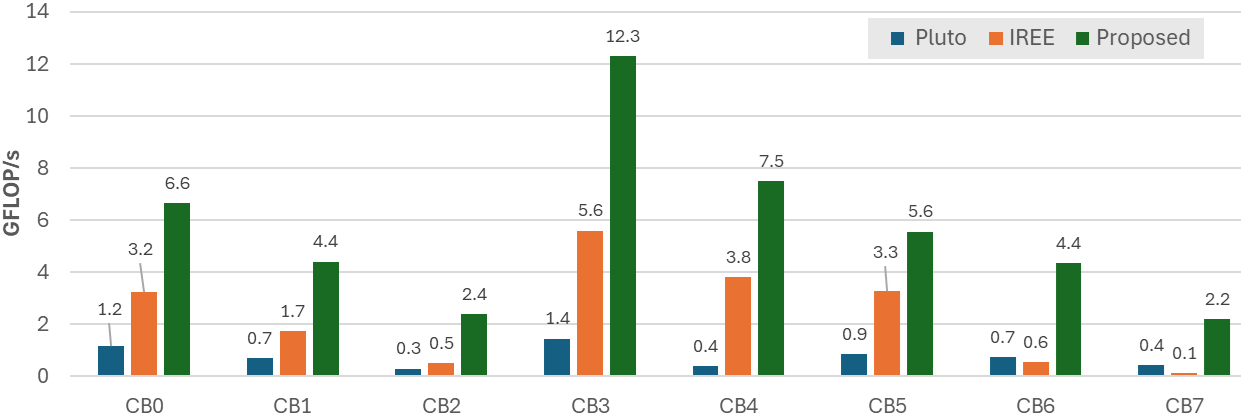}
    \caption{Evaluation of the first Einsum kernel}
    \label{fig:first}
\end{figure}

\begin{figure}
    \centering
    \includegraphics[width=1\linewidth]{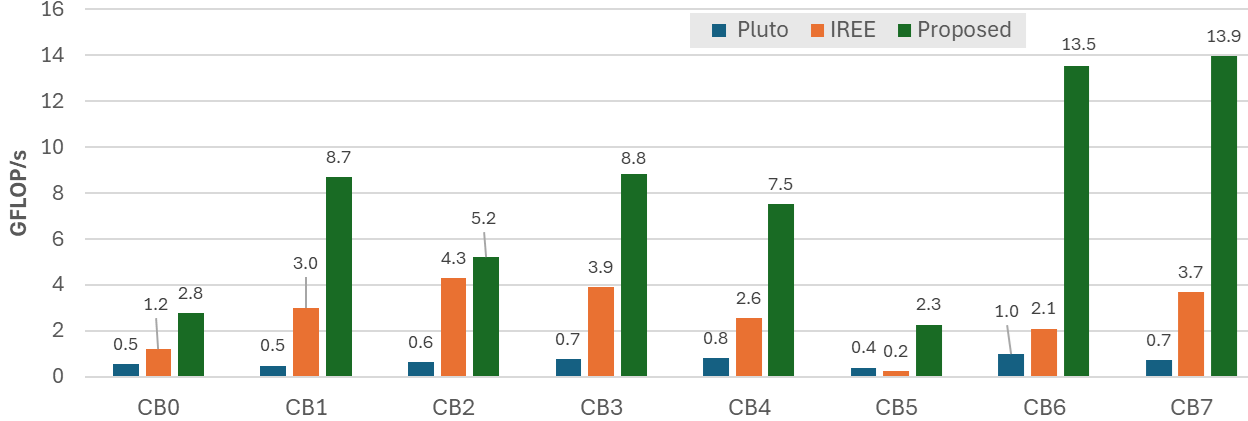}
    \caption{Evaluation of the Middle Einsum kernel}
    \label{fig:midle}
\end{figure}

\begin{figure}
    \centering
    \includegraphics[width=1\linewidth]{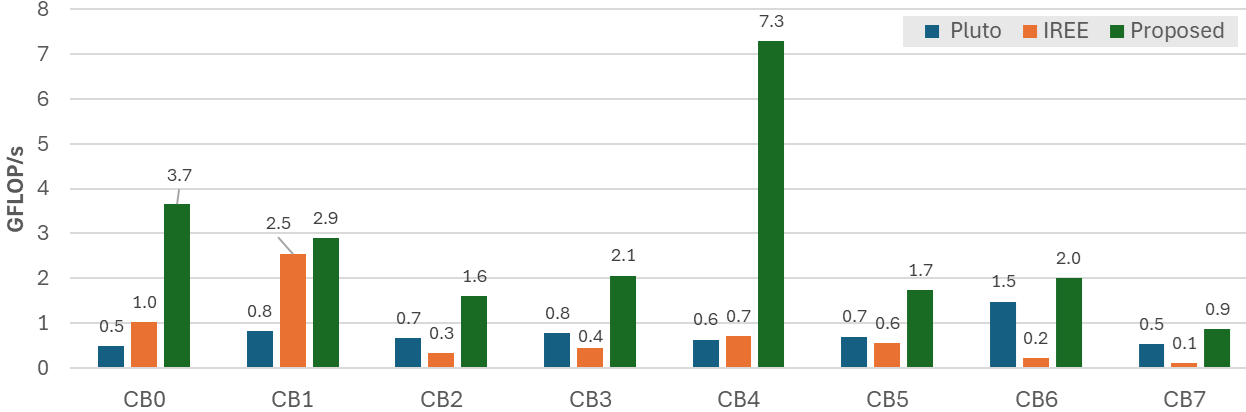}
    \caption{Evaluation of the Final Einsum kernel}
    \label{fig:final_einsum}
\end{figure}

\textit{Middle Einsum Kernel:} Figure~\ref{fig:midle} shows trends similar to those in Figure~\ref{fig:first}, as the first and middle Einsum kernels exhibit comparable characteristics. As we can see in Figure~\ref{fig:midle}, our method maintains its lead with an average of 7.84 GFLOPs, compared to IREE's 2.61 GFLOPs and Pluto's 0.64 GFLOPs.



Notably, cases such as CB5, CB6 and CB7 demonstrate our approach’s superior efficiency. In the first case, we can see a speedup of 12.5x compared to IREE. We attribute this to IREE's use of transposing and packing/unpacking layers. In this particular case, the solution involves only 2.58E+05 FLOPs, the smallest workload in the suite, rendering these optimizations largely ineffective. In CB6, a speedup of 5.7x is observed. We believe that this is due to the fact that we select the loop permutation $\{bt, mt,  rt, nt*rt\_1\}$ to fit data into L2-cache (and minimize DDR accesses). Similarly, in CB7, tiling the $bt$ loop further enhances performance, resulting in a 3.7× speedup.

\textit{Final Einsum Kernel:} Figure~\ref{fig:final_einsum} shows performance for the final Einsum kernel. In this kernel, we achieve an average of 2.76 GFLOPs, significantly outperforming IREE with 0.74 GFLOPs and Pluto with 0.76 GFLOPs. The proposed methodology demonstrates a notable reduction in HW utilization compared to previously kernel cases (First and Midle Einsum). This reduction can be attributed to two key factors: i) most of the layers are very small and they do not scale and ii) the vectorization process is not that efficient in the last Einsum kernel, as $k$ loop is vectorized (Listing 5).

Our method achieves between 1–14 GFLOPs, while the target hardware’s theoretical peak is 25.6 GFLOPs per CPU core. The achieved performance is lower than the theoretical peak primarily due to two reasons. First, the target loop kernel is memory-bound, and the memory hierarchy of this RISC-V processor is significantly slower than that of high-performance systems. Using the tool in \cite{smith}, we compared the bandwidth of the target embedded CPU against a high-performance Intel i9 CPU and observed 8x lower BW on the embedded platform.
Second, we intentionally used a small rank value (rank=8) in our experiments, as optimization is more challenging for small ranks due to their low scalability. 




\begin{figure}
    \centering
    \includegraphics[width=0.9\linewidth]{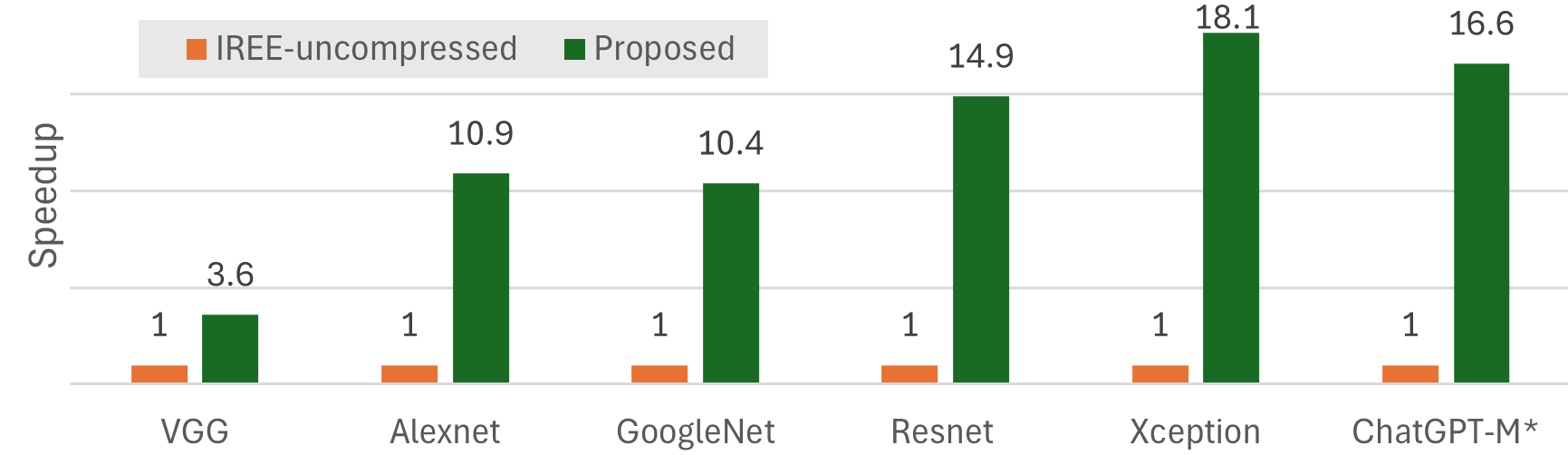}
    \caption{Speedup achieved over IREE for the initial uncompressed FC layers}
    \label{fig:app}
\end{figure}

\subsection{End-to-end evaluation}

In this subsection, we present an end-to-end evaluation of the studied models. We compare the proposed work against IREE but without applying LRF, across six AI models (Figure~\ref{fig:app}). 
This evaluation highlights the overall improvements achieved by incorporating LRF and utilizing our tool.
For the IREE benchmark, non-factorized FC layers were executed using the MMM kernel.

We report both the achieved speedup over IREE and the selected factorized solutions. As shown in the last column of Tables 1 and 2, multiple solutions are generated for each FC layer. For clarity, we selected for each layer the solution with the minimum FLOPs and a configuration length of two (comprising two Einsum layers). A fixed rank of eight was used for all solutions. The results are as follows: 

\begin{itemize}
    \item \textbf{ResNet}: The FC layer with shape of $[2048,  1000]$ was factorized into $[32 \times 64, 100 \times 10]$. To ensure minimal FLOPs, the shapes of the factorized layers follow the alignment strategy as proposed. For example, for a solution with rank-16 and core shapes $\{rt, nt, mt, rt\_1\}$, we get $\text{G}_1 = \{16, 32, 100, 1\}$ and $\text{G}_0 = \{1, 64, 10, 16\}$   

    \item \textbf{Xception}: Sharing the same FC layer dimensions as ResNet, $[2048,  1000]$ was factorized into $[32 \times 64, 25 \times 40]$

    \item \textbf{VGG}: The FC layer [512, 512] was factorized into [$16 \times 32$, $32 \times 16$]. The  FC layer [512, 256] was factorized into [$16 \times 32$, $16 \times 16$]. Finally, the FC layer [256, 100] was factorized into [$32 \times 8$, $10 \times 10$]

    \item \textbf{GoogleNet}: The FC layer [1024, 1000] was factorized into [$16 \times 64$, $40 \times 25$]

    \item \textbf{AlexNet}: The FC layer [4096, 2048] was factorized into [$64 \times 64$, $64 \times 32$]. The  FC layer [2048, 2048] was factorized into [$32 \times 64$, $64 \times 32$]. Additionally, the FC layer [2048, 10] was factorized into [$32 \times 64$, $5 \times 2$]

    \item \textbf{ChatGPT-M*.}: The  FC layer [1024, 1024] was factorized into [$16 \times 64$, $64 \times 16$]. The FC layer [4096, 1024] and [1024, 4096] was factorized into [$64 \times 64$, $64 \times 16$] and [$64 \times 16$, $64 \times 64$] respectively. In the final layer ([1024, 50257]), IREE could not compile it due to its large memory size, therefore, it is excluded from our results 
    
\end{itemize}

In Figure~\ref{fig:app}, our methodology (green bar) is compared to uncompressed IREE (orange bars). The y-axis represents
the speedup, while the x-axis lists the evaluated AI models. Execution times are normalized to
the leftmost blue bar for each model and the accumulated execution times for all layers are
considered. Note that accuracy tests were not conducted for these layers; the reported speedup
reflects improvements solely in the FC portions of the models.
On average, our approach achieves a 12x speedup compared to IREE. However, for the VGG
model, the speedup is less pronounced. This is because VGG contains smaller layers, which do not
benefit significantly from TTD. TTD is most effective in optimizing performance for larger layers.

\subsection{Performance Breakdown}\label{s:64}

In Figure~\ref{fig:gcc}, the performance of factorized FC layers is compared across different optimization stages: unoptimized using GCC O3 optimization level(blue), optimized after apply vectorization and packing (orange), optimized after apply RB and Tiling (green) and our proposed fully optimized approach after parallelization (purple). The y-axis represents the speedup, while the x-axis lists the compared AI models. We used the same combinations shapes with subsection 6.4, but in this case rank sixteen was used. Execution times are normalized to the leftmost blue bar for each model.

On average, our approach achieves a 37x speedup compared to GCC. Best performance improvements are noticed after applying packing and vectorization (9x performance improvements compared to gcc). Tiling and RB  provided 2x speedup on average. In the selected configuration shapes cache tiling did not improve performance as arrays fit in L2 cache. Finally, parallelization provided 1.7x speedup on average.

\begin{figure}
    \centering
    \includegraphics[width=0.9\linewidth]{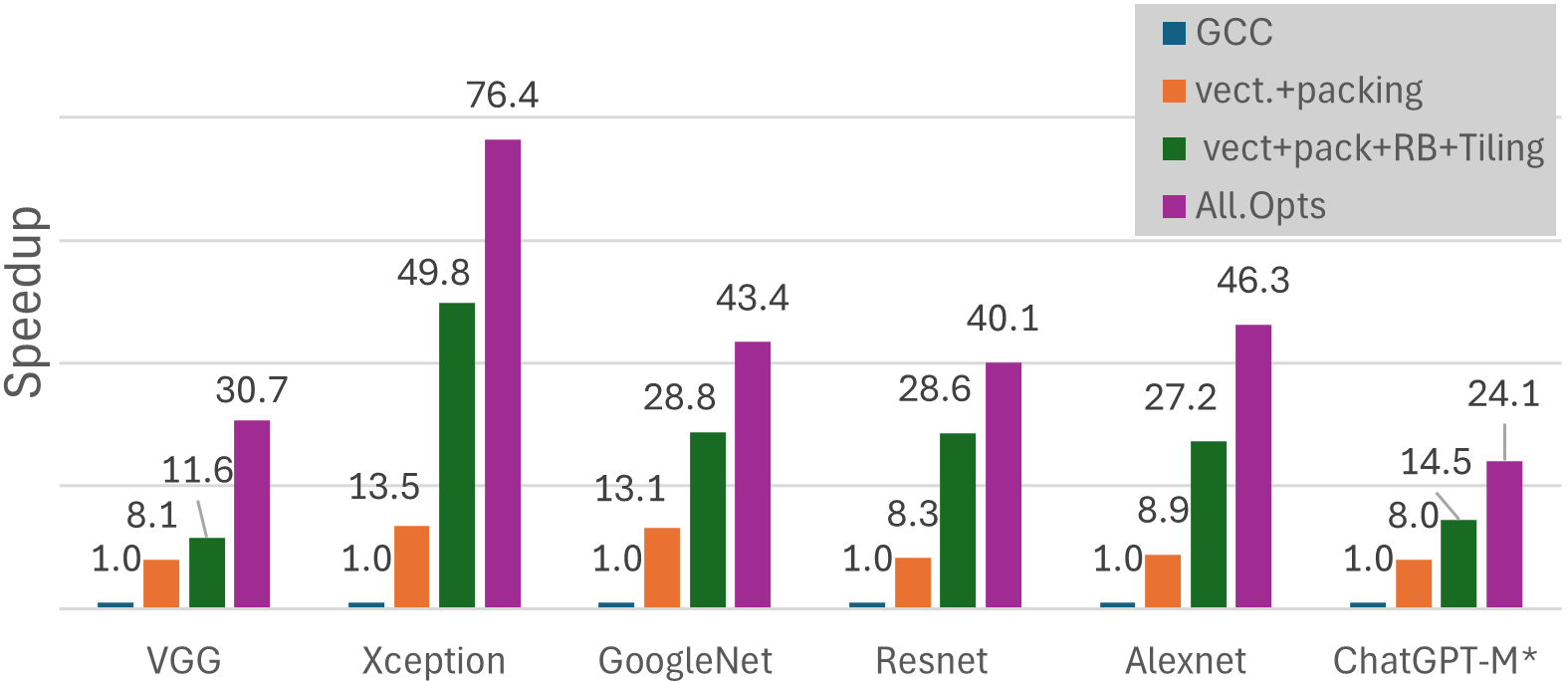}
    \caption{Performance Gains Across Factorized FC Layers Through Progressive Optimizations}
    \label{fig:gcc}
\end{figure}

\section{Conclusions}\label{sec:conclusion}

This paper presents an end-to-end LRF DSE methodology, along with a specialized design tool, for optimizing FC layers on RISC-V processors. The first contribution of this work is a novel shape alignment strategy that significantly prunes the DS by excluding decomposition shapes incapable of achieving low FLOPs. This strategy is general and applicable to any TTD DSE method. Our experiments demonstrate that this approach reduces the design space by a factor from 2.1x up to 92x, making the optimization process significantly more efficient.

However, low FLOPs or memory usage do not always guarantee efficient inference times. To address this, the second contribution introduces heuristics tailored to RISC-V platforms that prune LRF solutions incapable of achieving low inference latency. Solutions failing to meet vectorization and scalability constraints are systematically eliminated, ensuring only high-performance candidates are retained. Finally, the most critical compiler optimizations are applied to further enhance the performance of custom T3F layers. These optimizations effectively reduce inference latency and maximize computational efficiency on RISC-V processors. Moreover, the proposed compiler techniques are extensible to other processor architectures, highlighting their broader applicability.
Experimental results demonstrate that our TT-decomposed layers run on average 3× faster than IREE and 8× faster than Pluto when executed the same factorized layers. Furthermore, compared to the original uncompressed model running on IREE, our factorized version achieves an average speedup of 12x.


\section*{Acknowledgment}
This work is part of R-PODID project, supported by the Chips Joint Undertaking and its members, including the top-up funding by National Authorities of Italy, Turkey, Portugal, The Netherlands, Czech Republic, Latvia, Greece, and Romania under grant agreement No 101112338.

\bibliographystyle{ACM-Reference-Format}
\bibliography{t3f}

\newpage

\section*{Appendix}

In this section, we provide details about the compilation flow in IREE. Listing \ref{lst:IREE} presents two key components: i) the MLIR function fed to IREE for the CB5 Middle Einsum operation from Table 3, and ii) the corresponding IREE compilation command. As shown in line 3 of the listing, the core values are constant and initialized to 0.1.

To verify the correctness of the compilation process, we ran a simple example and compared the results from IREE with our own approach. Listing \ref{lst:IREE2} shows the MLIR code after the iree-stablehlo-to-stablehlo-preprocessing pass. Following this pass, the Cores (lines 5-8), Inputs (lines 10-14), and Outputs (lines 22-25) are transposed and reshaped. The Einsum operation is then transformed into a Matrix-Matrix Multiplication (MMM) kernel, as seen on line 17.

\begin{tcolorbox}[colframe=black, colback=white, width=\textwidth]
    \begin{minipage}[t]{\textwidth}
        \begin{lstlisting}[language=python, frame=single, numbers=left, stepnumber=1, numbersep=5pt, xleftmargin=15pt, label={lst:IREE-MLIR}]
#1)Input MLIR (input.mlir) for CB5 Middle Einsum of table 3
func @einsum(%x:tensor<9x7x8xf32>)->tensor<32x9x8xf32> {
    %G = stablehlo.constant dense<1.000000e-01>:
    tensor<8x7x32x8xf32>

    %y=stablehlo.einsum %G,%x,config="rnmk,bnk->mbr":
    (tensor<8x7x32x8xf32>, tensor<9x7x8xf32>)
    -> tensor<32x9x8xf32>
    return %y : tensor<32x9x8xf32>
}
        \end{lstlisting}
    \end{minipage}

    \vspace{10pt}

    \begin{minipage}[t]{\textwidth}
        \begin{lstlisting}[language=bash, frame=single, numbers=left, stepnumber=1, numbersep=5pt, xleftmargin=15pt, caption={IREE Compilation}, label={lst:IREE}]
#2) IREE Compilation Command
iree-compile \
  --iree-input-type=stablehlo \
  --iree-hal-target-backends=llvm-cpu \
  --iree-llvmcpu-link-embedded=false \
  --iree-llvmcpu-target-triple=riscv64-unknown-linux-gnu\
  --iree-llvmcpu-target-cpu=generic-rv64 \
  --iree-llvmcpu-target-abi=lp64d \
  --iree-llvmcpu-target-cpu-features=+f,..,+zvl256b,+v \
  --riscv-v-fixed-length-vector-lmul-max=8 \
  --iree-llvmcpu-link-static \
  --iree-llvmcpu-static-library-output-path=out_static.o \
  input.mlir -o out.vmfb
        \end{lstlisting}
    \end{minipage}

    \vspace{10pt}
\end{tcolorbox}

\begin{tcolorbox}[colframe=black, colback=white, width=\textwidth]
    \begin{minipage}[t]{\textwidth}
        \begin{lstlisting}[language=python, frame=single, numbers=left, stepnumber=1, numbersep=5pt, xleftmargin=15pt, caption={MLIR after "iree-stablehlo-to-stablehlo-preprocessing" pass}, label={lst:IREE2}]
func @einsum(%x:tensor<9x7x8xf32>)->tensor<32x9x8xf32> {
    %G = stablehlo.constant dense<1.000000e-01>
    #  Traspose reshape of Cores
    : tensor<8x7x32x8xf32>
    %1 = stablehlo.transpose %G, dims = [0, 2, 1, 3]:
    (tensor<8x7x32x8xf32>) -> tensor<8x32x7x8xf32>
    %2 = stablehlo.reshape %1 : (tensor<8x32x7x8xf32>)
    ->tensor<256x56xf32>

    # Traspose reshape Input
    %3 = stablehlo.transpose %x, dims = [1, 2, 0]:
    (tensor<9x7x8xf32>) -> tensor<7x8x9xf32>
    %4 = stablehlo.reshape %3 : (tensor<7x8x9xf32>)->
    tensor<56x9xf32>

    # Matrix-Matrix Multiplication kernel
    %5 = stablehlo.dot %2, %4 :
    (tensor<256x56xf32>, tensor<56x9xf32>)
    -> tensor<256x9xf32>

    # Traspose reshape Output
    %6 = stablehlo.reshape %5 : (tensor<256x9xf32>)
    -> tensor<8x32x9xf32>
    %y = stablehlo.transpose %6, dims = [1, 2, 0] :
    (tensor<8x32x9xf32>) -> tensor<32x9x8xf32>
    return %y : tensor<32x9x8xf32>
}
        \end{lstlisting}
    \end{minipage}
\end{tcolorbox}


\end{document}